\newcommand{\norm}[1]{\left\lVert#1\right\rVert}
\declaretheorem[name=Theorem]{theorem}
\declaretheorem[name=Lemma]{lemma}
\newcommand{\xhdr}[1]{\vspace{1.5mm}\noindent{\bf #1.}\hspace{0.5mm}}
\newcommand{\RR}{\mathbb{R}}
\crefname{theorem}{theorem}{theorems}
\crefname{proposition}{proposition}{theorems}
\crefname{lemma}{lemma}{theorems}
\crefname{infoprop}{informal proposition}{theorems}
\newcommand{\ourmethod}{\texttt{LOS-Net}\xspace}
\newcommand{\ourmethodmlp}{\texttt{ATP+R-MLP}\xspace}
\newcommand{\ourmethodtrans}{\texttt{ATP+R-Transf.}\xspace}
\newcolumntype{Y}{>{\centering\arraybackslash}m{.19\textwidth}} 
\title{Beyond Next Token Probabilities: Learnable, Fast Detection of \\ Hallucinations and Data Contamination on LLM Output Distributions}
\begin{document}

\date{}
\maketitle
\author{}
\begin{center}
\setlength{\tabcolsep}{6pt}      
\renewcommand{\arraystretch}{1.2}

\begin{tabular}{cc}
\textbf{Guy Bar-Shalom}\textsuperscript{*} & \textbf{Fabrizio Frasca}\textsuperscript{*} \\
Technion & Technion \\
{\footnotesize\ttfamily guybs99@gmail.com} &
{\footnotesize\ttfamily fabriziof@campus.technion.ac.il} \\
\end{tabular}

\vspace{1em}

\setlength{\tabcolsep}{4pt}
\renewcommand{\arraystretch}{1.2}

\begin{tabular}{YYYYY}
\textbf{Derek Lim} & \textbf{Yoav Gelberg} & \textbf{Yftah Ziser} & \textbf{Ran El-Yaniv} & \textbf{Gal Chechik} \\
MIT CSAIL & Technion & Nvidia & \shortstack{Technion,\\Nvidia} & \shortstack{Bar-Ilan University,\\Nvidia} \\
\end{tabular}

\vspace{1em}

\begin{tabular}{c}
\textbf{Haggai Maron} \\
\shortstack{Technion,\\Nvidia} \\
\end{tabular}
\end{center}

\begingroup
\renewcommand{\thefootnote}{\fnsymbol{footnote}}
\footnotetext[1]{Equal contribution.}
\endgroup

\begin{abstract}
The automated detection of hallucinations and training data contamination is pivotal to the safe deployment of Large Language Models (LLMs). These tasks are particularly challenging in settings where no access to model internals is available. Current approaches in this setup typically leverage only the probabilities of actual tokens in the text, relying on simple task-specific heuristics. Crucially, they overlook the information contained in the full sequence of next-token probability distributions. We propose to go beyond hand-crafted decision rules by learning directly from the complete observable output of LLMs --- consisting not only of next-token probabilities, but also the full sequence of next-token distributions. We refer to this as the LLM Output Signature (LOS), and treat it as a reference data type for detecting hallucinations and data contamination. To that end, we introduce LOS-Net, a lightweight attention-based architecture trained on an efficient encoding of the LOS, which can provably approximate a broad class of existing techniques for both tasks. Empirically, LOS-Net achieves superior performance across diverse benchmarks and LLMs, while maintaining extremely low detection latency. Furthermore, it demonstrates promising transfer capabilities across datasets and LLMs. Full code is available at \url{https://github.com/BarSGuy/Beyond-next-token-probabilities}.
\end{abstract}

\section{Introduction}

As the remarkable capabilities of LLMs continue to drive their expanding range of applications, detecting hallucinations~\cite{tonmoy2024comprehensive,liu2021token,huang2023survey,ji2023survey,rawte2023troubling}, and training data contamination~\cite{NEURIPS2020_1457c0d6, shi2023detecting,zhang2024min} becomes increasingly important to their reliable deployment and responsible use. Specifically, the tasks of Hallucination and Data Contamination Detection (resp.\ HD, DCD) relate to determining whether an LLM is fabricating information or is providing an incorrect answer to a user question, or whether it has been exposed to specific training data, such as copyrighted material.

\begin{figure*}[!t]
    \centering
    \includegraphics[width=\textwidth]{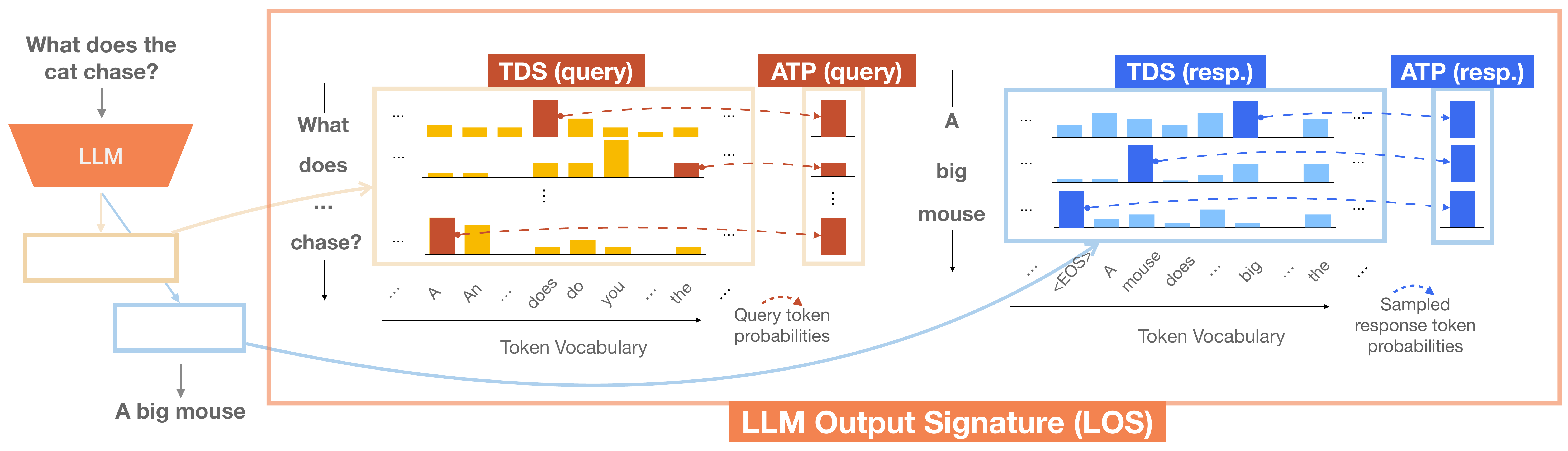}
    \caption{Left: The LLM processes the input ``What does the cat chase?'' and generates the output ``A big mouse''. Right: The corresponding query/response Token Distribution Sequences (TDS) and Actual Token Probabilities (ATP), together constituting the LLM Output Signature (LOS). We propose to detect instances of hallucinations and data contamination by learning directly over this unified data representation, beyond task specific heuristics operating on partial information thereof.}
    \label{fig: TDS}
\end{figure*}

Prominent methods to tackle HD include probing techniques which, although effective, require restrictive white-box access to model internals~\citep{belinkov2022probing,orgad2024llms,hewitt-manning-2019-structural,hewitt2019designing,rateike2023weakly}, such as its hidden states. On both HD and DCD, gray-box methods relax these assumptions by operating \emph{only} on LLM outputs, thus finding application to a broader set of models. These approaches~\citep{guerreiro2022looking,kadavath2022language,varshney2023stitch,huang2023look} typically extract simple features on the sequence of token probabilities, a vector we term Actual Token Probabilities (ATP). However, these methods, often based on heuristics, \emph{overlook} the information contained in the complete next-token probability distributions generated over the token vocabulary at each generation step -- we term this matrix the Token Distribution Sequence (TDS), see \Cref{fig: TDS}. Importantly, this limitation can mask distinctive patterns in the model's text generation process, including its confidence or uncertainty, known to correlate with its correctness~\citep{kuhn2023semanticuncertaintylinguisticinvariances,Farquhar2024}. This aspect is evident even at the level of a single generation step. Consider, e.g., an LLM generating a token with probability $0.5$ in two scenarios: in one case, the remaining next-token probability mass is concentrated on a single alternative $(0.5, 0.5, 0, ..., 0)$, while in the other it is spread across many tokens: $(0.5, 0.01, ..., 0.01)$. See \Cref{fig:TDS imp} in \Cref{app:Importance of TDS Illustration} for an illustration of a similar case. Yet ATP-based approaches would treat them identically. Similarly, an ATP value of $0.1$ at a certain time step could indicate either high uncertainty (if it is the highest probability in a diffused distribution) or strong evidence against the token (if it is a low-ranking probability in a peaked distribution). A recent promising approach \cite{zhang2024min} used some TDS information using heuristics, but a \emph{principled framework} to utilize this data source is still lacking.

\xhdr{Our approach} We argue that a successful gray-box detection approach should leverage both ATP and TDS, together forming what we term the LLM Output Signature (LOS) (\Cref{fig: TDS}) -- the complete observable representation of an LLM in the gray-box setup. Instead of relying on heuristics, we treat LOS as a sequential, high-dimensional and structured data modality on which we apply principled deep learning techniques. We propose \ourmethod, an efficient attention-based model\footnote{Our model features around 1M parameters.} operating on an effective encoding of ATP, TDS, and their interactions. We prove that \ourmethod can approximate a broad class of functions applied to the LOS, subsuming many recent approaches \citep{guerreiro2022looking, kadavath2022language, varshney2023stitch, huang2023look,shi2023detecting,zhang2024min}.
Our comprehensive empirical study on DCD and HD demonstrates a substantial performance gap between using the complete LOS and relying solely on the ATP. Notably, \ourmethod improves over all considered baselines across both tasks, often by a significant margin. Crucially, our architecture is extremely efficient, with detection times of $\sim 10^{-5}$s per instance. This makes it a compelling approach for applications such as on-line error detection for guided-generation, as opposed to previously proposed popular methods based on multiple LLM prompting or generations, such as Semantic Entropy (SE) and P(True)~\citep{kuhn2023semanticuncertaintylinguisticinvariances,kadavath2022language}. \ourmethod also exhibits promising dataset-level transfer and strong cross-model generalization, the latter suggesting its viable application to real-world tasks such as copyright-infringement detection over closed-source LLMs (see, e.g., our results on the BookMIA benchmark~\citep{shi2023detecting} in \Cref{subsec:Data Contamination Detection}). Last, we show \ourmethod retains strong performance also when processing a small subset of the TDS, expressed in terms of the number of top-scoring output probabilities at each generation step. This extends its successful application to LLMs with restricted API access, such as GPT models~\cite{openai2024gpt4technicalreport}.

\xhdr{Contributions} (1) We introduce LOS as a suitable ``data type'' for the detection of hallucinations and data contamination, and develop \ourmethod, an effective and efficient learning framework for that. (2) We show this framework unifies and generalizes previous approaches, and demonstrate it achieves superior performance across models, datasets, and tasks. (3) We find that \ourmethod exhibits strong empirical evidence for cross-model generalization and promising cross-dataset transfer abilities.

\section{Related Work}
We review background and related work on DCD and HD, focusing on studies leveraging logits or output probabilities. Given the breadth of research, we highlight the most relevant works for our setup and refer interested readers to\ref{app:additional background} for further details on these tasks.

\xhdr{Data Contamination Detection (DCD)} This task consists in identifying text passages an LLM has likely seen during training, or memorized. This is crucial for ensuring fair benchmarking of LLMs, guiding dataset curation, and auditing potential copyright infringement. Early methods leveraged model loss \cite{yeom2018privacy,carlini2019secret}, assuming that models overfit their training data. Later refinements introduced \emph{reference models} -- independent LLMs trained on disjoint datasets from a similar distribution -- comparing their scores with the target model~\cite{carlini2021extracting,carlini2022membership}. However, this approach requires access to a well-matched reference model with similar architecture, which is often impractical in real-world settings. Recently, \cite{shi2023detecting} introduced Min-K\%, which flags an input as contaminated if the log probability of its bottom K tokens exceeds a predefined threshold. Building on this approach, \cite{zhang2024min} proposed Min-K\%++, which refines contamination detection by calibrating the next-token log-likelihood using the mean and standard deviation of log-likelihoods across all candidate tokens in the vocabulary. 

\xhdr{Hallucination Detection (HD)} This task has been studied to enable selective intervention, allowing LLMs to prevent fabricated outputs only when necessary \cite{snyder2024early,yin2024characterizing,valentin2024cost}. Recently, \cite{orgad2024llms,azaria2023internalstatellmknows} showed that training a classifier on top of LLMs' hidden states is highly effective for hallucination detection. However, these methods operate under the white-box assumption, requiring full access to the model's internal states. In contrast, our paper explores a more constrained (gray-box) setting, of particular interest especially when targeting closed-source LLMs with restricted API access.

\xhdr{Output probability-Based Analysis} Previous works showed that using log probabilities or raw logits as decision thresholds can be effective for various tasks, including HD in LLMs~\cite{guerreiro2022looking,varshney2023stitch}, correctness self-evaluation \cite{kadavath2022language}, uncertainty estimation \cite{huang2023look}, and zero shot learning \cite{atzmon2019adaptive}. However, these approaches often rely on naive handcrafted thresholding. Other works~\citep{verma-etal-2024-ghostbuster,mosca-etal-2022-suspicious} rely on linear classifiers over features extracted from LLM outputs aiming at tackling adjacent tasks, such as detecting machine-generated text~\citep{wu2023llmdet} but overlook the full TDS, limiting contextual understanding.

\section{Learning on LLM Output Signatures}
In this section, we define the LLM Output Signature (LOS) and introduce \ourmethod, a novel architecture specifically designed to process LOS.

\subsection{LLM Output Signatures (LOS)}
\label{subsec:Notation and Problem Formulation}

Let \( f \) denote a pretrained LLM, and \( \vec{s} \) a text input to \( f \) consisting of $n$ tokens. When queried with $\vec{s}$, $f$ produces outputs $\mathbf{X}_{s} = f(\vec{s})$, i.e., a matrix in $\mathbb{R}^{n \times V}$ of next-token probabilities for each token in $\vec{s}$, where $V$ is the size of the token vocabulary. We define \( \vec{g} \) to be the LLM response to $\vec{s}$, consisting of $m$ tokens generated using $f$'s outputs in $\mathbf{X}_g \in \mathbb{R}^{m \times V}$ (and $\mathbf{X}_s$). We refer to  $\mathbf{X}_s$ or $\mathbf{X}_g$ as \emph{Token Distribution Sequences} (TDS). See \Cref{fig:llm generation}, \Cref{app:LLM Processing Pipeline}. We also define \( \mathbf{p}_s \in \mathbb{R}^n, \mathbf{p}_g \in \mathbb{R}^m \), vectors which holds the probabilities associated with the actual tokens appearing in $\vec{s},\vec{g}$ respectively. We denote these as the \emph{Actual Token Probabilities} (ATP). Specifically, 
$ (\mathbf{p}_s)_i \coloneqq \mathbf{X}_{i,v}$ where $v \in \{1,\dots,V\}$ is the token used in the $i+1$ place in the sequence $\vec{s}$ and similarly for $\vec{g}$ (see \Cref{fig: TDS}). We call the pairs $(\mathbf{X}_s,\mathbf{p}_s)$ or $(\mathbf{X}_g,\mathbf{p}_g)$ the \emph{LLM Output Signature} (LOS).  For DCD, we analyze input sequences using $(\mathbf{X}_s,\mathbf{p}_s)$ as our interest lies in how the model processes the input text $\vec{s}$. For HD, we use $(\mathbf{X}_g,\mathbf{p}_g)$ as we need to make predictions on the model's response. We may use $(\mathbf{X},\mathbf{p})$ if the distinction between the tasks is irrelevant, and use $N$ as the sequence length.

\xhdr{Problem Statement} LOS elements, along with their associated annotations depending on the task of interest, can be gathered into datasets \( D = \{ \big( (\mathbf{X}, \mathbf{p})_i, y_i \big) \}_{i=1}^{\ell} \) where supervised learning problems can be instantiated. Our goal in this paper is to propose a neural architecture that can effectively utilize the complete LOS to solve tasks such as DCD, HD, or any other classification problem thereon.

\subsection{\ourmethod}
Learning from LOS data objects presents inherent challenges, particularly related to their encoding. Next, we detail these challenges and introduce our \ourmethod approach, illustrated in full in \Cref{app:LOS-Net Visualization} and \Cref{fig:our-arch}. What follows is a detailed explanation of each of its components.

\xhdr{Preprocessing the token distribution sequences} Utilizing \(\mathbf{X}\) may pose significant challenges due to three key factors. \emph{(1) Complexity:} The vocabulary tensor can be extremely large in real-world scenarios. For instance, \citet{liang2023xlm} (XLM-V) reported a vocabulary size of 1M tokens, which, for a small batch of documents and popular context sizes, would already entail processing a tensor of tens (or hundreds) of GBs. \emph{(2) Transferability:} Vocabulary size and order may significantly vary between LLMs, something which can complicate transfer learning -- e.g., training on one LLM and applying on another with a different vocabulary size; \emph{(3) Limited Access:} As already mentioned, in certain LLMs, such as those released by OpenAI, the output tensor \(\mathbf{X}\) is only \emph{partially accessible}, with APIs only exposing a small number of the top (log-)probs. To tackle these challenges, we propose selecting, for each row of $\mathbf{X}$, a fixed number of elements. Specifically, we preprocess \(\mathbf{X}\) by sorting each row independently and selecting the top \(K\) probabilities, as follows:
\begin{equation}
\label{eq: X'}
    \mathbf{X'} = \text{row-sort}(\mathbf{X})_{:, :K},
\end{equation}
resulting in \(\mathbf{X'} \in \mathbb{R}^{N \times K}\). 
This approach not only reduces computational complexity but also provides a standardized representation independent of the vocabulary size (for an appropriate choice of $K$). Later, in \Cref{sec:exp}, we will show how even small values of $K$ can capture most of the TDS probability mass and enable strong empirical performance.

\xhdr{Encoding the ATP} The tensor $\mathbf{X}'$ provides a comprehensive description of the LLM's output, but does not explicitly encode an important source of information: the probability $\mathbf{p}$ of the actual tokens appearing in the sequence, i.e, the ATP. 
While these values are technically present in the TDS (since it contains the full distribution), they are not directly distinguishable from the other token probabilities in the vocabulary. Thus, we do also include ATPs as separate inputs to our architecture and further complement these probabilities with additional information which allows us to contextualize them with respect to the whole TDS. Specifically, we argue that valuable information is encoded in the \emph{rank}\footnote{The rank of the $i$-th token is defined as: $r_i(\mathbf{X}, \mathbf{p}) = \sum_{v=1}^V \mathbb{I}(\mathbf{X}_{i, v} > p_i),$ where $\mathbb{I}(\cdot)$ is the indicator function.} (position) of the ATP within the sorted TDS. This information reveals the ``gap'' between the actual token and the token the model would most likely expect to find instead. We encode the rank in a way to make this feature more amenable for learning: we apply scaling to a closed interval and transform it with specific parameters, obtaining $\text{RE}(\mathbf{X}, \mathbf{p})$. More details are found in Appendix~\ref{app:our-baselines}. 

\xhdr{Architecture} Given the preprocessed TDS \(\mathbf{X}'\) and the rank encoding $\text{RE}(\mathbf{X}, \mathbf{p})$, we first linearly project \(\mathbf{X}'\) via $\mathbf{W} \in \mathbb{R}^{K \times K'}$, concatenate it with $\text{RE}(\mathbf{X}, \mathbf{p})$, and then feed it to an encoder-only transformer module \(\mathcal{T}\) with learnable positional encodings, operating in the sequence dimension \citep{vaswani2017attention}:
\begin{align}
\label{eq: arch}
    h_\theta(\mathbf{X}, \mathbf{p}) = \mathcal{T} \left( \mathbf{X}'\mathbf{W} \, \bigg\| \, \text{RE}(\mathbf{X}, \mathbf{p})  \right).
\end{align}
\noindent Here, $\theta$ includes all model's parameters, $\|$ denotes concatenation on the feature dimension. Finally, we pool over the [CLS] token and obtain output scores via a linear layer. The resulting model, \ourmethod, is trained with binary cross-entropy loss.

\section{Generalization of Previous Approaches}
\label{subsec:TDSnet provably generalizes previous approaches}

As already mentioned, prior research has introduced various gray-box, methods for HD and DCD based on LLM's output probabilities \cite{guerreiro2022looking,kadavath2022language,varshney2023stitch,huang2023look}. In what follows, we propose a general framework to unify these diverse techniques, and show that this can be captured by our \ourmethod method, shedding light on its flexibility.

\xhdr{Motivating example: Min-K\% ~\citet{shi2023detecting}} Min-K\%, a prominent, recent method for DCD, makes predictions on an input text $\vec{s}$ based on a score $R$ calculated as the average of the smallest $K$\% log-probs:  $R(\vec{s}) = \frac{1}{|M|} \sum_{i \in M} \log (p_i)$, with $M = \{i \mid p_i < \text{perc}(\mathbf{p}, K) \}$ being the set of token indices whose probabilities are in the first $K$-th percentile of $\mathbf{p}$. We note that it is instructive to rewrite the scoring equation as: %
\begin{equation}
    R(\vec{s}) = \sum_{i=1}^{|\vec{s}|} \overbrace{\frac{\log (p_i)}{\left\lceil \frac{K}{100} \cdot |\vec{s}| \right\rceil}}^{\text{token-wise score}} \cdot \underbrace{\mathbb{I}\big(\overbrace{p_i}^{\text{confidence}} < \overbrace{\text{perc}(\mathbf{p}, K)}^{\text{adaptive threshold}}\big)}_\text{gating}.
\end{equation}
This highlights a general pattern: that of computing a global score by aggregating token-wise values meeting a (dynamic) ``acceptance'' condition, a form of ``gating''. To unify the aforementioned baselines under a common framework, we formalize this pattern via a family of functions (see next).

\xhdr{Gated Scoring Functions (GSFs)} We define the family of \emph{Gated Scoring Functions} (GSF) as the set of functions scoring LOSs by aggregating token-wise scores across the input sequence whenever their confidence values exceed a (possibly adaptive) threshold. GSFs are described in terms of the following components:
(1) A confidence function $\kappa: \mathbb{R}^{N\times k} \times \mathbb{R}^{N} \rightarrow \mathbb{R}^N$ that assigns confidence values to each token in the sequence; (2) A threshold function $T: \mathbb{R}^{N\times k} \times \mathbb{R}^{N} \rightarrow \mathbb{R}$ that determines an acceptance criterion; and (3) A weight function $g: \mathbb{R}^{N\times k} \times \mathbb{R}^{N} \rightarrow \mathbb{R}^N$ that assigns importance scores to tokens. Given a LOS $(\mathbf{X}, \mathbf{p})$, a GSF computes a global score $R(\mathbf{X}, \mathbf{p})$ as follows:
\begin{align}
    F(\mathbf{X}, \mathbf{p})_i &=
    \begin{cases}
    g(\mathbf{X}', \mathbf{p})_i, & \text{if } \kappa(\mathbf{X}', \mathbf{p})_i \geq T(\mathbf{X}', \mathbf{p}), \nonumber \\
    0, & \text{otherwise},
    \end{cases}, \\
    R(\mathbf{X}, \mathbf{p}) &= \sum_{i=1}^N F(\mathbf{X}, \mathbf{p})_i, \label{eq:GSF}
\end{align}
where $\mathbf X'$ is the sorted version of $\mathbf X$, as per \Cref{eq: X'}. 
The family of GSF is flexible enough to capture previously proposed gray-box methods, as we show in the following:
\begin{restatable}[GSFs capture known baselines]{proposition}{GSFsCaptureBaselines}\label{prop:GSFs_capture_baselines}
    Let $\mathcal{B}$ be the set of scoring functions implemented by the Min/Max/Mean aggregated probability methods~\citep{guerreiro2022looking, kadavath2022language, varshney2023stitch, huang2023look} for HD, as well as Loss~\citep{yeom2018privacy}, the MinK\%~\citep{shi2023detecting} and MinK\%++~\citep{zhang2024min} methods for DCD. For any scoring function $f \in \mathcal{B}$, there exists a choice of functions $\kappa, T, g$ such that the GSF $R$ in~\Cref{eq:GSF}, implements $f$.
\end{restatable}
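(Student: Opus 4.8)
The plan is to prove the statement by explicit construction: for each baseline $f \in \mathcal{B}$ I exhibit a concrete triple $(\kappa, T, g)$ and verify that the induced GSF $R$ of \Cref{eq:GSF} reproduces $f$. The key observation making this tractable is that $\kappa$, $T$, and $g$ are each allowed to be \emph{arbitrary} functions of the full (sorted) signature $(\mathbf{X}', \mathbf{p})$; in particular $g_i$ may depend on global quantities such as the number of tokens currently passing the gate, and both $\kappa$ and $g$ may compute per-row statistics of $\mathbf{X}'$. Since row-sorting is a bijection on each row, it preserves the per-position multiset of vocabulary probabilities, so any statistic over the vocabulary (e.g.\ the mean and standard deviation of the log-probabilities at position $i$) is recoverable from $\mathbf{X}'$ alone. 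I would state this remark once up front and then reuse it.

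I would then organize the baselines into three groups according to their aggregation pattern. \textbf{(i) Ungated sums} --- the Loss score and the Mean-probability score --- are obtained by disabling the gate, e.g.\ $\kappa \equiv 0$ and $T \equiv 0$ so that every token is accepted, and placing the entire per-token summand into $g$: for Loss, $g(\mathbf{X}', \mathbf{p})_i = \log p_i$ (up to the sign/normalization convention), and for Mean, $g(\mathbf{X}', \mathbf{p})_i = p_i / N$. \textbf{(ii) Percentile-gated means} --- Min-K\% and Min-K\%++ --- follow the motivating example already worked out in the text: take $\kappa_i = -p_i$ and $T = -\text{perc}(\mathbf{p}, K)$ so that $\kappa_i \ge T \iff p_i \le \text{perc}(\mathbf{p}, K)$, and let $g_i$ be the log-probability divided by the size of the accepted set. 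For Min-K\%++ the only change is that the per-token quantity is the calibrated score $z_i = (\log p_i - \mu_i)/\sigma_i$, with $\mu_i, \sigma_i$ the vocabulary mean and std at position $i$; by the remark above these are functions of $\mathbf{X}'$, so both the gating confidence and the weight are expressible within the allowed signature. \textbf{(iii) Order statistics} --- the Min and Max aggregated-probability methods --- are captured by letting the threshold select the extremal token: for Max set $\kappa_i = p_i$ and $T = \max_j p_j$, for Min set $\kappa_i = -p_i$ and $T = \max_j (-p_j)$, so that exactly the $\arg\max$ / $\arg\min$ indices pass.

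The main technical care is in group (iii): ties. If several tokens attain the extremal probability, all of them pass the gate and a naive $g_i = p_i$ would over-count. I would resolve this exactly as Min-K\% implicitly does, by normalizing the weight by the number of accepted tokens, i.e.\ $g_i = p_i / |\{ j : \kappa_j \ge T \}|$; since $g$ is permitted to depend on the whole signature this is a legal choice and yields $R = \min_j p_j$ (resp.\ $\max_j p_j$) even under ties. A second, purely cosmetic point is the mismatch between the strict inequality $p_i < \text{perc}(\mathbf{p},K)$ used by Min-K\% and the non-strict $\kappa_i \ge T$ of the GSF template; this affects only tokens lying exactly on the percentile boundary and is handled either by adopting the matching percentile convention or by an infinitesimal shift of $T$. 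Having verified each of the three groups, their union covers all of $\mathcal{B}$, completing the proof. I expect group (iii) and its tie handling to be the only place requiring genuine thought; the remaining cases are direct pattern-matching against the GSF template.
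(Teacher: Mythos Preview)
Your proposal is correct and follows essentially the same constructive approach as the paper: exhibit explicit $(\kappa, T, g)$ for each baseline, with the same choices (trivial gate for Mean/Loss, $\kappa = \pm\mathbf{p}$ with $T$ at the extremum for Min/Max, and the percentile gate of the motivating example for MinK\%/MinK\%++). The one notable difference is that you handle ties in the Min/Max case by normalizing $g$ by the number of accepted tokens, whereas the paper simply opens its proof with the blanket assumption ``assuming no ties in the ATP values $\mathbf{p}$'' and takes $g = \mathbf{p}$; your treatment is slightly more careful but not a different route.
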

\noindent It is easy to see, e.g., how MinK\% is implemented as a GSF\footnote{For a sequence length of $N$, it suffices to choose: $T(\mathbf{X}', \mathbf{p}) = -\text{perc}(\mathbf{p}, K) = -\big(\text{sort}(\mathbf{p})_{\left\lceil \frac{K}{100} \cdot N \right\rceil} \big), \quad \kappa(\mathbf{X}', \mathbf{p}) = -\mathbf{p}, \quad g(\mathbf{X}', \mathbf{p}) = \frac{\log \mathbf{p}}{\left\lceil \frac{K}{100} \cdot N \right\rceil}.$}. Refer to \Cref{app:Proofs} for more details on how other baselines are implemented.

\xhdr{\ourmethod can approximate GSFs and implement known baselines} As the following result shows, our \ourmethod  can, in fact, approximate virtually all GSFs of interest; intuitively, there exist sets of parameters such that it evaluates ``arbitrarily close'' to the target GSFs.

\begin{restatable}[\ourmethod can approximate \Cref{eq:GSF}]{proposition}{ThresholdImplementation}\label{prop:ThresholdImplementation}
Assume maximal possible vocabulary size \(V_\text{max}\) and context size \(N_{\text{max}}\). Let \(\mathcal{X} \times \mathcal{M} \subseteq \mathbb{R}^{N_{\text{max}} \times V_\text{max}} \times \mathbb{R}^{N_{\text{max}}}\) represent a compact subset in the LOS. For any measurable $\kappa : \mathcal{X} \times \mathcal{M} \rightarrow \mathbb{R}^{N_{\text{max}}}$, measurable $T: \mathcal{X} \times \mathcal{M} \rightarrow \mathbb{R} $, measurable and integrable weight function $g: \mathcal{X} \times \mathcal{M} \rightarrow \mathbb{R}^{N_{\text{max}}}$, and for any $\epsilon > 0$, there exists a set of parameters $\theta$ such that our model $h_\theta : \mathcal{X} \times \mathcal{M} \rightarrow \mathbb{R}$ satisfies $\norm{h_\theta - R}_{L_1} < \epsilon$ where $\norm{\cdot}_{L_1}$ denotes the $L_1$ norm.
\end{restatable}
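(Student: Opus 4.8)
The plan is to prove the statement in three moves: (i) replace the measurable target $R$ by a continuous surrogate incurring negligible $L_1$ error; (ii) approximate this continuous surrogate by $h_\theta$ uniformly on the compact domain; and (iii) combine the two estimates by the triangle inequality. The starting observation is that, under the stated hypotheses, $R$ itself lies in $L^1$: since $\norm{F(\mathbf X,\mathbf p)_i} \le \norm{g(\mathbf X',\mathbf p)_i}$ pointwise and $g$ is integrable, $\int_{\mathcal X\times\mathcal M}\norm{R}\,d\mu \le \sum_{i=1}^{N_{\max}}\int\norm{g(\mathbf X',\mathbf p)_i}\,d\mu < \infty$ for the underlying measure $\mu$. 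Because $\mathcal X\times\mathcal M$ is compact (hence carries a finite Borel measure) and continuous functions are dense in $L^1$ of a finite measure, there is a continuous $\tilde R:\mathcal X\times\mathcal M\to\RR$ with $\norm{\tilde R - R}_{L_1} < \epsilon/2$. This step quietly disposes of the discontinuity introduced by the gate $\mathbb I(\kappa_i \ge T)$ in \Cref{eq:GSF}: I never approximate the indicator directly, and in particular avoid any delicate argument about the boundary set $\{\kappa_i = T\}$, folding it into the single statement that $R$ is measurable and integrable.

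It then remains to approximate the continuous $\tilde R$ by $h_\theta$ uniformly, from which the $L_1$ bound follows since the measure is finite. Here I would argue in two sub-steps. First, I would check that the fixed preprocessing in \Cref{eq: arch} is information-complete for the quantities $R$ depends on: taking $K = V_{\max}$, the map $\mathbf X \mapsto \mathbf X'$ retains the full row-sorted distribution, a full-rank $\mathbf W$ makes $\mathbf X'\mathbf W$ injective, and the rank encoding $\text{RE}(\mathbf X,\mathbf p)$ records the position $r_i$ of each $p_i$ inside its sorted row, so that $p_i = \mathbf X'_{i, r_i}$ is recoverable. Since the GSF components $\kappa, T, g$ are defined on $(\mathbf X', \mathbf p)$, the per-token features fed to $\mathcal T$ thus form an injective, continuous encoding of everything $\tilde R$ uses. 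Second, I would invoke a universal approximation theorem for encoder-only transformers (e.g.\ the sequence-to-sequence universality of self-attention networks with positional encodings): the [CLS] coordinate of $\mathcal T$ attends to the entire sequence and, composed with the final linear readout, can approximate any continuous scalar function of the whole encoded sequence to arbitrary sup-norm accuracy on a compact set. Applying this to $\tilde R$ precomposed with the (continuous) decoding of the input features yields a $\theta$ with $\norm{h_\theta - \tilde R}_\infty < \epsilon/(2\,\mu(\mathcal X\times\mathcal M))$, hence $\norm{h_\theta - \tilde R}_{L_1} < \epsilon/2$.

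Combining the two estimates gives $\norm{h_\theta - R}_{L_1} \le \norm{h_\theta - \tilde R}_{L_1} + \norm{\tilde R - R}_{L_1} < \epsilon$, as required. The step I expect to carry the real weight is the transformer approximation: one must verify that \emph{this particular} architecture --- fixed top-$K$ sort and linear projection, rank encoding, a single [CLS] token, and one linear output layer --- is expressive enough to realize the global, sequence-level computation that $R$ embodies, namely forming the adaptive threshold $T$ from all tokens, broadcasting it to gate each token, and summing the accepted contributions. The natural mechanism is that self-attention implements the aggregation underlying both $T$ and the outer $\sum_i$ of \Cref{eq:GSF} (in the spirit of set-pooling architectures), while the pointwise feed-forward blocks approximate $\kappa$, $g$, and the gated product. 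Confirming the injectivity of the encoding and making the cited universality theorem apply verbatim to the [CLS]-pooled, scalar-output variant of \ourmethod are the details I would still need to nail down.
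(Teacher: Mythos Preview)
Your proposal is correct and shares the paper's three-beat skeleton (show the target is in $L^1$, replace it by a continuous surrogate, invoke transformer universality, combine by the triangle inequality), but differs in the \emph{granularity} at which the approximation is carried out. The paper does not approximate the scalar $R$ directly. It instead introduces the vector field $f^{(1)}(x)_i = \mathbb{I}(\kappa(x)_i \ge T(x))\,g(x)_i$ of per-token contributions, shows this field lies in $L^1(\mathcal D)$, approximates it by a continuous field $\tilde f^{(1)}$ coordinatewise, and then applies the sequence-to-sequence universality theorem of \citet{yun2019transformers} \emph{verbatim} to obtain a transformer $h_\theta^{(1)}$ close to $\tilde f^{(1)}$ in $L_1$; the outer sum in \Cref{eq:GSF} is handled afterwards by the observation that [CLS] pooling is as expressive as sum pooling. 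This decomposition buys an exact match to the cited result, which is stated for maps $\mathbb{R}^{n\times d}\to\mathbb{R}^{n\times d}$ and only in $L_p$, not sup-norm. Your route, by contrast, needs a universality statement for the [CLS]-pooled, scalar-output architecture --- precisely the detail you flag as still to be nailed down --- and your appeal to \emph{uniform} approximation of $\tilde R$ is stronger than what that theorem actually delivers (though replacing ``uniform'' by ``$L_1$'' would suffice and costs you nothing). A smaller divergence: the paper short-circuits your injectivity-of-encoding discussion by simply choosing the rank-encoding parameters so that $\text{RE}(\mathbf X,\mathbf p)=\mathbf p$ and setting $\mathbf W=I$, feeding $\mathbf X'\,\|\,\mathbf p$ to the transformer directly rather than arguing that $p_i$ is recoverable from $r_i$ and the sorted row.
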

To prove this result, we build on existing universality results on approximating continuous functions with Transformers~\citep{yun2019transformers}, showing that our (generally non-continuous) target functions can be approximated by continuous functions. Importantly, \Cref{prop:ThresholdImplementation} implies that, as long as the LOS space of interest lies within a compact domain\footnote{This is inherently satisfied when using probabilities; or via clamping in the case of logits or log-probs.}, our model can approximate the general GSF in \Cref{eq:GSF} of LOSs for any LLM under mild conditions on $\kappa$, $T$, and $g$. Note that \Cref{prop:ThresholdImplementation} cannot be generally extended to $L_{\infty}$ due to the discontinuity of GSFs. The practical relevance of~\Cref{prop:ThresholdImplementation}, is underscored by the following:
\begin{restatable}[Approximation of Baselines by \ourmethod]{corollary}{BaselinesApprox}\label{prop:BaselinesApprox}
Our architecture, as defined in ~\Cref{eq: arch}, can arbitrarily well approximate, in the \(L_1\) sense, any of the baseline methods in $\mathcal{B}$ when operating on context and token-vocabulary of, resp., maximal sizes $N_{\text{max}}$ and $ V_{\text{max}}$.
\end{restatable}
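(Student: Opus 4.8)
The plan is to obtain \Cref{prop:BaselinesApprox} as a straightforward composition of the two preceding results, namely \Cref{prop:GSFs_capture_baselines} and \Cref{prop:ThresholdImplementation}. The key observation is that the corollary follows essentially by transitivity: every baseline in $\mathcal{B}$ is \emph{exactly} a GSF (by \Cref{prop:GSFs_capture_baselines}), and every GSF can be approximated arbitrarily well in $L_1$ by \ourmethod (by \Cref{prop:ThresholdImplementation}). So the whole argument amounts to checking that the two results can be chained, i.e.\ that the particular $\kappa, T, g$ witnessing each baseline as a GSF meet the hypotheses required by the approximation proposition.

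First I would fix an arbitrary $f \in \mathcal{B}$ and an $\epsilon > 0$. By \Cref{prop:GSFs_capture_baselines}, there exist functions $\kappa, T, g$ such that the associated GSF $R$ (as in \Cref{eq:GSF}) satisfies $R = f$ pointwise on the relevant LOS domain. Next I would verify that, for each of the explicit constructions used to realize the baselines of $\mathcal{B}$ as GSFs (the $-\mathbf{p}$, percentile, and $\log\mathbf{p}$ choices exhibited for MinK\%, and the analogous closed forms for Min/Max/Mean, Loss, and MinK\%++ detailed in \Cref{app:Proofs}), the confidence $\kappa$ and threshold $T$ are measurable, and the weight $g$ is measurable and integrable over the compact LOS domain $\mathcal{X} \times \mathcal{M}$. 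This is where a little care is needed: functions such as $\log \mathbf{p}$ are integrable on a compact set only once we exclude a neighborhood of $p = 0$ (or clamp, as the excerpt notes), so I would restrict to a compact $\mathcal{X} \times \mathcal{M}$ bounded away from the problematic region, which is precisely the setting in which probabilities or clamped log-probs live.

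With measurability and integrability in hand, I would invoke \Cref{prop:ThresholdImplementation} with these specific $\kappa, T, g$ and the chosen $\epsilon$, yielding parameters $\theta$ such that $\norm{h_\theta - R}_{L_1} < \epsilon$. Since $R = f$, this reads $\norm{h_\theta - f}_{L_1} < \epsilon$, which is exactly the claim of the corollary once we recall that $h_\theta$ operates on context and vocabulary of maximal sizes $N_{\text{max}}$ and $V_{\text{max}}$, matching the hypotheses of \Cref{prop:ThresholdImplementation}. As $f$ and $\epsilon$ were arbitrary, this establishes the result for every baseline in $\mathcal{B}$.

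The main obstacle I anticipate is not the logical chaining --- which is essentially immediate --- but ensuring that the GSF realizations from \Cref{prop:GSFs_capture_baselines} genuinely satisfy the regularity conditions (measurability, and especially integrability of $g$) demanded by \Cref{prop:ThresholdImplementation}. The delicate cases are those baselines whose weight functions involve $\log\mathbf{p}$ or the calibrated log-likelihood of MinK\%++, since these are unbounded as $p \to 0$; the resolution is to work on a compact domain bounded away from $p=0$ (equivalently, via clamping), at which point $g$ becomes bounded, hence integrable, on $\mathcal{X} \times \mathcal{M}$. I would therefore state the corollary's compactness assumption explicitly so that this integrability holds uniformly across all $f \in \mathcal{B}$, after which the composition argument goes through without further complication.
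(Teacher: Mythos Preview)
Your high-level plan matches the paper's: chain \Cref{prop:GSFs_capture_baselines} with \Cref{prop:ThresholdImplementation} and check that the specific $\kappa, T, g$ meet the regularity hypotheses; your handling of the $\log$-based weights via clamping is also what the paper does for MinK\%++. There is, however, one technical point you pass over that the paper treats carefully. Because the corollary is stated for \emph{maximal} sizes $N_{\max}, V_{\max}$, inputs with shorter actual length or smaller vocabulary are padded (e.g.\ with $-1$), and the GSF realizations exhibited in \Cref{prop:GSFs_capture_baselines} --- which were written for a fixed $N$ --- are not automatically correct on padded inputs. The paper therefore re-derives padding-aware versions of each baseline's $\kappa, T, g$ (using auxiliary functions that mask or remap padding values), and then establishes their \emph{continuity}, not just measurability, by viewing the padded domain as a finite disjoint union of compact pieces indexed by the effective $(N', V')$ and invoking a pasting lemma. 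Continuity on a compact domain then gives boundedness and hence integrability of $g$ for free. Your sketch would go through once this padding step is made explicit; without it, the $\kappa, T, g$ you import from \Cref{prop:GSFs_capture_baselines} are not yet defined on the full domain required by \Cref{prop:ThresholdImplementation}.
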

The above states that well-established, successful baselines (see class $\mathcal{B}$ in ~\Cref{prop:GSFs_capture_baselines}) can be approximated by \ourmethod. All proofs are enclosed in \Cref{app:Proofs}.

\begin{figure*}[t] 
\centering

\begin{minipage}[t]{0.60\textwidth}
  \begin{table}[H]
\resizebox{\columnwidth}{!}{
\begin{tabular}{l|ccc|ccc}
    \toprule
    \multirow{2}{*}{Method} & HotpotQA & IMDB & Movies & HotpotQA & IMDB & Movies \\
     \cmidrule{2-7}
    & \multicolumn{3}{c|}{Mistral-7b-instruct} & \multicolumn{3}{c}{Llama3-8b-instruct} \\
    \midrule
    Logits-mean          & 
        61.00 \scalebox{0.8}{$\pm$ 0.20}  & 
        57.00 \scalebox{0.8}{$\pm$ 0.60} & 
        63.00 \scalebox{0.8}{$\pm$ 0.50} & 
        65.00 \scalebox{0.8}{$\pm$ 0.20} & 
        59.00 \scalebox{0.8}{$\pm$ 1.70} & 
        75.00  \scalebox{0.8}{$\pm$ 0.50} \\
    Logits-min           & 
        61.00 \scalebox{0.8}{$\pm$ 0.30}  & 
        52.00 \scalebox{0.8}{$\pm$ 0.70} & 
        66.00 \scalebox{0.8}{$\pm$ 0.80} & 
        67.00 \scalebox{0.8}{$\pm$ 0.80} & 
        55.00 \scalebox{0.8}{$\pm$ 1.60} & 
        71.00 \scalebox{0.8}{$\pm$ 0.50} \\
    Logits-max           & 
        53.00 \scalebox{0.8}{$\pm$ 0.80}  & 
        47.00 \scalebox{0.8}{$\pm$ 0.40} & 
        54.00 \scalebox{0.8}{$\pm$ 0.40} & 
        59.00 \scalebox{0.8}{$\pm$ 0.50} & 
        51.00 \scalebox{0.8}{$\pm$ 0.90} & 
        67.00 \scalebox{0.8}{$\pm$ 0.30} \\
    Probas-mean          & 
        63.00 \scalebox{0.8}{$\pm$ 0.30} & 
        54.00 \scalebox{0.8}{$\pm$ 0.80} & 
        61.00 \scalebox{0.8}{$\pm$ 0.20} & 
        61.00 \scalebox{0.8}{$\pm$ 0.20} & 
        73.00 \scalebox{0.8}{$\pm$ 1.50} & 
        73.00  \scalebox{0.8}{$\pm$ 0.60} \\
    Probas-min           & 
        58.00 \scalebox{0.8}{$\pm$ 0.30} & 
        51.00 \scalebox{0.8}{$\pm$ 1.00} & 
        60.00 \scalebox{0.8}{$\pm$ 0.80} & 
        60.00 \scalebox{0.8}{$\pm$ 0.40} & 
        57.00 \scalebox{0.8}{$\pm$ 1.60} & 
        65.00 \scalebox{0.8}{$\pm$ 0.40} \\
    Probas-max           & 
        50.00 \scalebox{0.8}{$\pm$ 0.50} & 
        48.00 \scalebox{0.8}{$\pm$ 0.40} & 
        51.00 \scalebox{0.8}{$\pm$ 0.50} & 
        56.00 \scalebox{0.8}{$\pm$ 0.50} & 
        49.00 \scalebox{0.8}{$\pm$ 0.80} & 
        64.00 \scalebox{0.8}{$\pm$ 0.60} \\
    \midrule
    \rowcolor{orange!10} P(True)              & 
        54.00 \scalebox{0.8}{$\pm$ 0.60} & 
        62.00 \scalebox{0.8}{$\pm$ 0.90} & 
        62.00 \scalebox{0.8}{$\pm$ 0.50} & 
        55.00 \scalebox{0.8}{$\pm$ 0.50} & 
        60.00 \scalebox{0.8}{$\pm$ 0.60} & 
        66.00 \scalebox{0.8}{$\pm$ 0.40} \\
    \rowcolor{orange!10} Semantic Entropy & 
        67.66 \scalebox{0.8}{$\pm$ 0.55} & 
        62.44 \scalebox{0.8}{$\pm$ 0.81}  & 
        \underline{70.24} \scalebox{0.8}{$\pm$ 0.68} & 
        65.58 \scalebox{0.8}{$\pm$ 0.53} & 
        74.96 \scalebox{0.8}{$\pm$ 1.00} & 
        72.27 \scalebox{0.8}{$\pm$ 0.65} \\
    \midrule
    \ourmethodmlp           & 
        68.92 \scalebox{0.8}{$\pm$ 0.24}  & 
        90.70 \scalebox{0.8}{$\pm$ 0.50}  & 
        66.04 \scalebox{0.8}{$\pm$ 0.13}  & 
        64.50 \scalebox{0.8}{$\pm$ 0.75}  & 
        \underline{88.68} \scalebox{0.8}{$\pm$ 0.30}  & 
        73.25 \scalebox{0.8}{$\pm$ 0.15}  \\
    \ourmethodtrans       & 
        \underline{69.70} \scalebox{0.8}{$\pm$ 0.39} & 
        \underline{89.64} \scalebox{0.8}{$\pm$ 1.08} & 
        67.92 \scalebox{0.8}{$\pm$ 0.98}  & 
        \underline{66.72} \scalebox{0.8}{$\pm$ 0.39}  & 
        \underline{85.46} \scalebox{0.8}{$\pm$ 1.14} & 
        \underline{75.89} \scalebox{0.8}{$\pm$ 1.07}   \\
    \textbf{\ourmethod}      & 
        \textbf{72.92} \scalebox{0.8}{$\pm$ 0.45}  & 
        \textbf{94.73} \scalebox{0.8}{$\pm$ 0.58} & 
        \textbf{72.20} \scalebox{0.8}{$\pm$ 0.66}  & 
        \textbf{72.60} \scalebox{0.8}{$\pm$ 0.34} &
        \textbf{90.57} \scalebox{0.8}{$\pm$ 0.28}  & 
        \textbf{77.43} \scalebox{0.8}{$\pm$ 0.66}  \\  
        \midrule
        \midrule
        \rowcolor{red!10} Act. Probe (incomp.$^\dagger$) & 
        73.00 \scalebox{0.8}{$\pm$ 0.60} & 
        92.00 \scalebox{0.8}{$\pm$ 1.00} & 
        72.00 \scalebox{0.8}{$\pm$ 0.50} & 
        77.00 \scalebox{0.8}{$\pm$ 0.50} & 
        81.00 \scalebox{0.8}{$\pm$ 1.40} & 
        78.00 \scalebox{0.8}{$\pm$ 0.40} \\
    \bottomrule
\end{tabular}
}
    \caption{Test AUCs for HD over Mis-7b and L-3-8b (\textbf{bold}: best method, \underline{underlined}: second best). \colorbox{orange!10}{orange} indicates baselines requiring additional prompting/generations. $^\mathbf{\dagger}$\colorbox{red!10}{Activation Probes}, included as reference, are \emph{incomparable as they access model internals}. 
    }
    \label{tab:hall-merged}
  \end{table}
\end{minipage}%
\hfill
\begin{minipage}[t]{0.38\textwidth}
    \begin{figure}[H]
        \centering
        \includegraphics[width=0.6\textwidth]{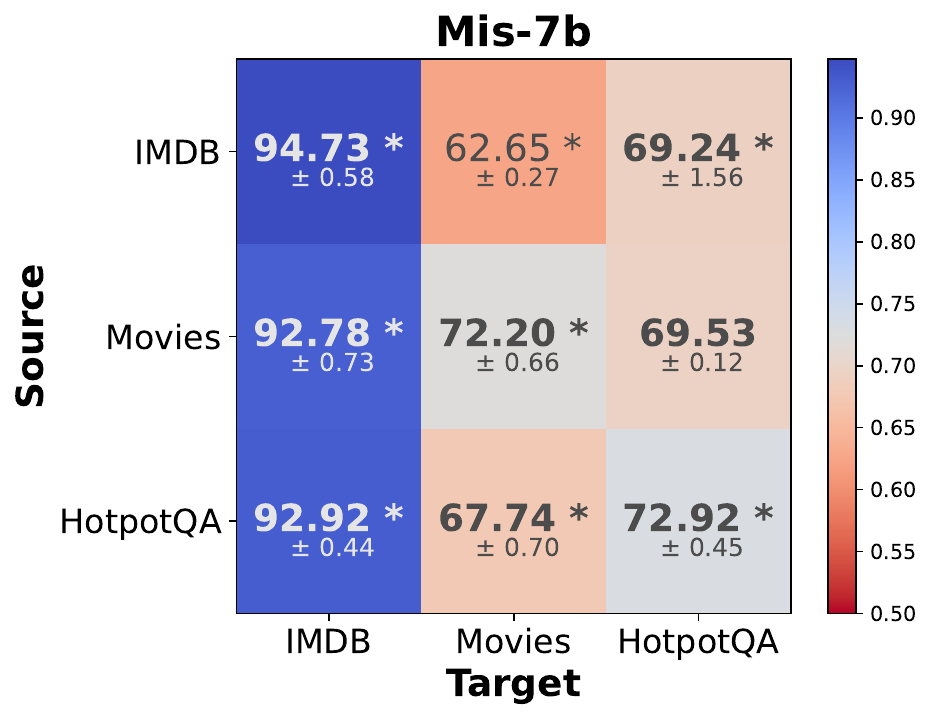}
        \includegraphics[width=0.6\textwidth]{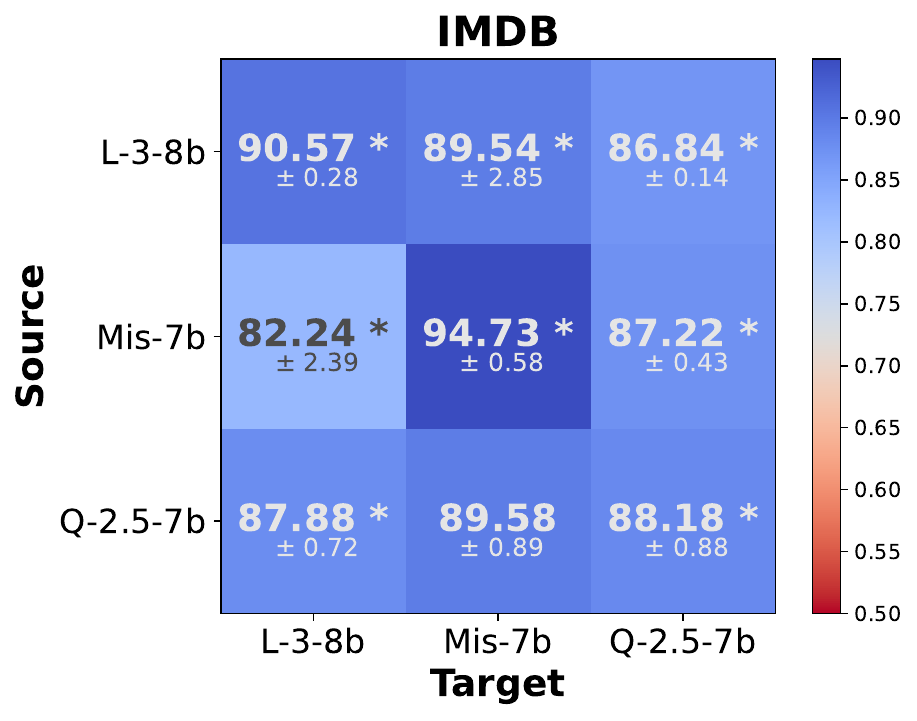}
    \caption{Transfer Test AUC to varying datasets (top, Mis-7b) and LLMs (bottom, IMDB fixed).}
    \label{fig:transfer_hd_main}

    \end{figure}
\end{minipage}
\end{figure*}

\section{Experiments}\label{sec:exp}

We assess various aspects of learning with LOS via the following questions: (1) Is learning on LOS an effective approach for DCD and HD? Does it outperform baselines? And how important is $\mathbf{X}$, i.e., the TDS, in the pair $(\mathbf{X}, \mathbf{p})$, often overlooked? (\Cref{subsec:Data Contamination Detection,subsec:Hallucination Detection}); (2)  Does our model exhibit transfer capabilities across LLMs and datasets? (\Cref{subsec:Generalization to other LLMs and datasets}); (3) What is the practical runtime of our approach and how robust is it w.r.t.\ $K$ (\Cref{subsec:Run-Time,subsec:k-ablation})? 

In the following, we present our main results, and refer to \Cref{app:exp} for additional results and details.

\xhdr{General setup} Our experiments focus on the two tasks of DCD and HD; in all results presented next, hyperparameter $K$ is always set to $1,000$, while its impact is discussed in~\Cref{subsec:k-ablation}. Aligning with prior work, we use datasets and LLMs from \cite{shi2023detecting,zhang2024min} for DCD and \cite{orgad2024llms} for HD, where we also experiment with an additional LLM (Qwen-2.5-7b-Instruct~\citep{qwen2}). Further details are in subsequent sections. As performance metric, we use the area under the ROC curve (AUC), a standard metric in this domain \cite{orgad2024llms,shi2023detecting,zhang2024min}. We run each experiment with three different random seeds (when applicable) and report the mean along with the standard deviation of the results. All \ourmethod experiments were conducted using PyTorch \cite{paszke2019pytorch} on a single NVIDIA L-40 GPU.

\xhdr{Newly introduced learning-based baselines} In addition to task-specific baselines, we also introduce two novel learning-based baselines to appreciate the contribution of the TDS: \ourmethodmlp, \ourmethodtrans. Specifically, we ablate information about the TDS and only process the ATP and rank information with, resp., an MLP or Transformer backbone. Formal definitions are in \Cref{app:our-baselines}.

\subsection{Hallucination Detection}
\label{subsec:Hallucination Detection} 

\xhdr{Datasets and LLMs} We adopt datasets from \citet{orgad2024llms}, following the same setup: the objective is to predict whether an LLM-generated response to a given input prompt is correct or not. We choose three datasets spanning various domains and tasks: HotpotQA without context \cite{yang2018hotpotqa}, IMDB sentiment analysis \cite{maas2011learning}, roles in Movies \cite{orgad2024llms}. Details regarding the annotation process, splits and dataset sizes are in \Cref{app:Datasets for Hallucination Detection}. As the target LLMs, coherently with \citet{orgad2024llms}, we use Mistral-7b-instruct-v0.2 \cite{jiang2023mistral} (Mis-7b) and LlaMa3-8b-instruct \cite{touvron2023llama} (L-3-8b), and further experiment with Qwen-2.5-7b-Instruct~\citep{qwen2} (Q-2.5-7b).

\xhdr{HD Baselines} 
\begin{enumerate}
    \item Aggregated probabilities/logits~\cite{guerreiro2022looking,kadavath2022language,varshney2023stitch,huang2023look}. They simply operate mean/max/min pooling over the ATP to score LLM confidence for error detection. We refer to them as Logit/Probas-mean/min/max.
    \item \colorbox{orange!10}{P(True)} -- \citet{kadavath2022language} found that LLMs show reasonable calibration in assessing their own correctness via additional querying.
    \item \colorbox{orange!10}{Semantic Entropy}~\citep{Farquhar2024,kuhn2023semanticuncertaintylinguisticinvariances}: a popular technique resorting to additional generations and an auxiliary entailment model to assess the uncertainty of LLM's responses at a semantic level, argued, in turn, to be predictive for correctness. Note that both this method and the above P(True) require additional prompting and/or generations, making their detection latency orders of magnitudes higher than other methods in comparison, refer to~\Cref{subsec:Run-Time}.
    \item \colorbox{red!10}{Activation Probes}~\citep{orgad2024llms,azaria2023internalstatellmknows,belinkov2022probing} are linear classifiers fitted over the LLM's internal activations. They operate in the more restrictive white-box setup, thus \emph{not directly comparable} to \ourmethod. Still, they constitute relevant performance references. We probe the last generated token at the layer maximizing validation performance. 
\end{enumerate}

\xhdr{Results} \Cref{tab:hall-merged} presents a comprehensive summary of results on Mis-7b and L-3-8b. These clearly demonstrate that \ourmethod outperforms all gray-box baselines across all six dataset/LLM combinations, often by a significant margin. These also include P(True) and Semantic Entropy, which use auxiliary prompts or generations. We highlight how, on the IMDB dataset, \ourmethod achieves an AUC improvement of around 31 units over the best of these baselines for Mis-7b and 17 over the best baseline for L-3-8b. Intriguingly, we note how \ourmethod outperforms even white-box Activation Probes in 2 out of 6 combinations, while performing similarly in 3 of them. Our results also indicate that ATP learning-based baselines consistently underperform compared to \ourmethod, underscoring the critical role of the TDS, $\mathbf{X}$. Our ATP-based learnable baselines still outperform non-learnable probability-based methods in most cases, suggesting that a learning approach relying exclusively on ATP can still be a viable solution in certain scenarios. Results on Q-2.5-7b are consistent with the above findings, and are deferred to \Cref{app:Results On Qwen}.

\subsection{Data Contamination Detection}\label{subsec:Data Contamination Detection}

DCD is often framed as a Membership Inference Attack (MIA)~\citep{shokri2017membership,mattern2023membership,shi2023detecting}. A DC dataset $D = \{q_i, y_i\}_{i=1}^{\ell}$ contains ${\ell}$ text samples, where $q_i$ represents the text and $y_i$, the target, indicates whether it was part of the training data or not.

\xhdr{Datasets and LLMs} We use three datasets to assess DCD, specifically: WikiMIA-32 and WikiMIA-64~\cite{shi2023detecting} (excerpts from Wikipedia articles), as well as BookMIA~\cite{shi2023detecting} (excerpts from books). Henceforth, due to space limitations, we will only discuss details and results related to the latter, while referring readers to \Cref{app:Results On The WikiMIA Dataset} for the former. In BookMIA, positive members correspond to books known to be well memorized by certain OpenAI models~\citep{chang2023speak}, or otherwise known to (partly) be in the pretraining corpus of other open-source LLMs~\citep{antebi2025tag}. Non-members include excerpts from books released after 2023, necessarily absent from the pretraining corpus of the last ones. This dataset allows us to test \ourmethod in a realistic scenario akin to copyright-infringement detection. In particular, contrary to previous works, we propose a novel split that ensures all excerpts from the same book always appear either in the training or test split (and never in both). Details are enclosed in \Cref{app:bookmia}. We attack LLMs considered in~\citep{antebi2025tag}: LlaMa-13b/30b \cite{touvron2023llama} (L-13b/30b), Pythia-6.9b/12b\cite{biderman2023pythia} (P-6.9b/12b).

\begin{figure*}[t] 
\centering

\begin{minipage}[t]{0.55\textwidth}
  \begin{table}[H]
\resizebox{\columnwidth}{!}{
\begin{tabular}{l|cccc}
    \toprule
    Method / LLM & P-6.9b & P-12b & L-13b & L-30b \\
    \midrule
    Loss                  & 67.40  & 76.27  & 76.23  & 89.18  \\
    MinK            & 68.78  & 77.32  & 75.36  & 89.61  \\
    MinK++          & 66.73  & 71.76  & 72.87  & 80.60  \\
    \midrule
    \rowcolor{red!10}  Zlib            & 50.01 & 60.84 & 61.94 & 80.83 \\
    \rowcolor{red!10}  Lowercase           & 74.97  & 81.64  & 67.80  & 82.18  \\
    \rowcolor{red!10}  Ref             & \underline{89.52}  & \textbf{91.93}  & \underline{84.58}  & \underline{94.93}  \\
    \midrule
    \ourmethodmlp           & 56.31 \scalebox{0.8}{$\pm$ 1.48}  & 57.18 \scalebox{0.8}{$\pm$ 1.06}  & 66.60 \scalebox{0.8}{$\pm$ 1.05} & 83.89 \scalebox{0.8}{$\pm$ 0.41}  \\
    \ourmethodtrans       & 79.59 \scalebox{0.8}{$\pm$ 0.61}  & 74.77 \scalebox{0.8}{$\pm$ 0.57} & 74.65 \scalebox{0.8}{$\pm$ 0.79}  & 87.62 \scalebox{0.8}{$\pm$ 0.68}  \\
    \textbf{\ourmethod}      & \textbf{90.71} \scalebox{0.8}{$\pm$ 0.90}  & \underline{89.43} \scalebox{0.8}{$\pm$ 0.59} & \textbf{91.02} \scalebox{0.8}{$\pm$ 0.15} & \textbf{95.60} \scalebox{0.8}{$\pm$ 0.41}  \\        \bottomrule
\end{tabular}
}

    \caption{Test AUCs on BookMIA. `P': Pythia, `L': LlaMa-1 (\textbf{bold}: best, \underline{underlined}: second best, \colorbox{red!10}{pink}: reference-based).}
    \label{tab:bookmia-results}
  \end{table}
\end{minipage}%
\hfill
\begin{minipage}[t]{0.40\textwidth}
    \centering
    \begin{figure}[H]
        \centering
        \includegraphics[width=0.7\columnwidth]{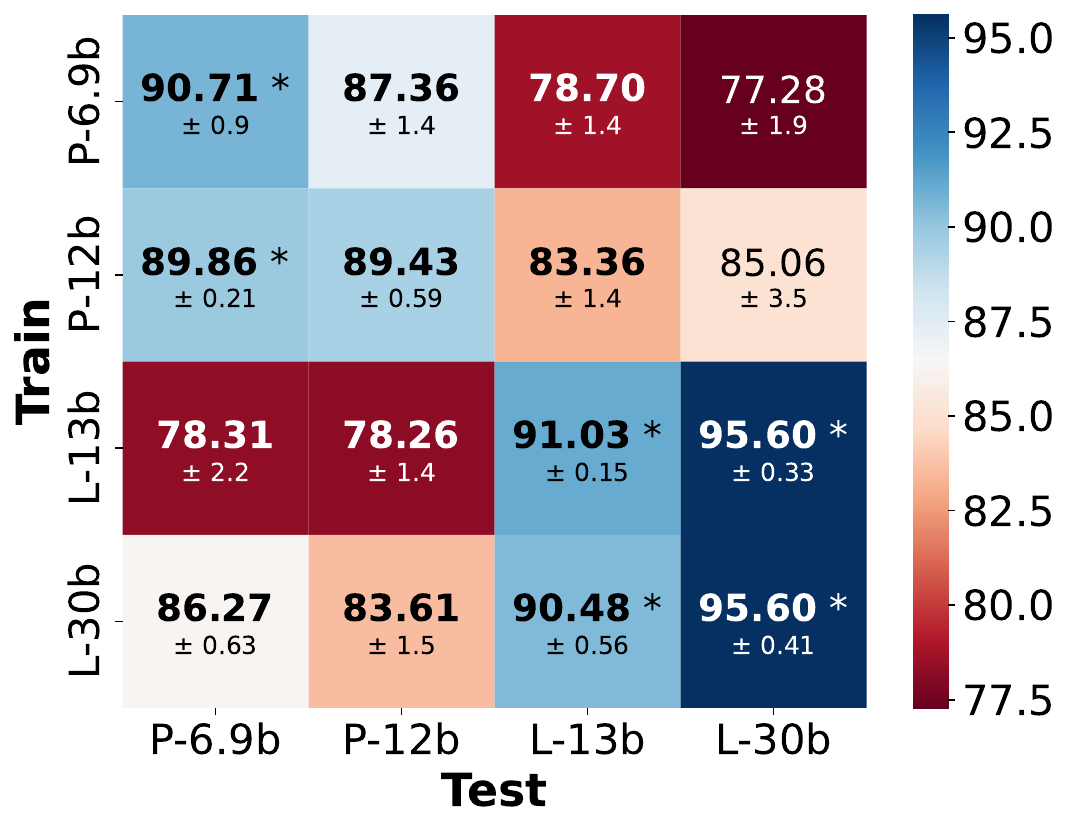}
        \caption{BookMIA zero-shot AUC -- \textbf{bold}, $^*$: outperforms, resp., ref-free and ref-based.}
        \label{fig:gene-bookmia}
    \end{figure}
\end{minipage}

\end{figure*}

\xhdr{DCD Baselines} The Loss approach~\cite{yeom2018privacy} directly uses the loss value as the detection score. The Reference (\colorbox{red!10}{Ref}) method~\cite{carlini2021extracting} calibrates the target LLM's perplexity leveraging a similar reference model known or supposed not to have memorized text of interest\footnote{For example for Pythia-12b, a valid reference LLM would be the smaller Pythia-70M.}. Both \colorbox{red!10}{Zlib} and \colorbox{red!10}{Lowercase}~\cite{carlini2021extracting} are also reference-based methods: they utilize zlib compression entropy and lowercased text perplexity as reference for normalization. Lastly, Min-K\%~\cite{shi2023detecting} and Min-K\%++~\cite{zhang2024min} are reference-free methods, which examine token probabilities and average a subset of the minimum token scores, or a function thereof, over the input. For these baselines, we select their hyperparameters by maximizing performance on the validation set(s).

\xhdr{Results} Refer to \Cref{tab:bookmia-results} for results on BookMIA. \ourmethod attains exceptional results, largely surpassing other reference-free approaches. Among these last ones, ours is the only method that can match or outperform even the reference-LLM-based baselines. Importantly, (even partial) access to the TDS reveals crucial to obtain such strong reference-free performance: our ATP-based learnable methods -- which only process features for the actual sequence tokens -- incur indeed significant performance drops.
As for WikiMIA, while full results are enclosed in \Cref{app:Results On The WikiMIA Dataset} (\Cref{tab:WikiMIA}), we point out how \ourmethod consistently surpasses all baselines across all combinations of LLMs and datasets. We also report the second-best method is MinK\%++, followed by MinK\%, consistent with the findings in \cite{zhang2024min}.

\subsection{Generalization to Other LLMs and Datasets}
\label{subsec:Generalization to other LLMs and datasets}
\xhdr{Zero-shot Cross-LLM Generalization in DCD} We assess our model's ability to detect DC in target LLMs that were unseen during training. Using the BookMIA benchmark and the setup described in \Cref{subsec:Data Contamination Detection}, we evaluate our model directly across different LLMs \emph{without any fine-tuning}. This setup is particularly relevant in DCD scenarios where contamination information is not yet available for newly released LLMs. The results are presented in the heatmap shown in \Cref{fig:gene-bookmia}. We observe strong transferability: in 10/12 cases, our model achieves the best performance among reference-free approaches, highlighted in bold in \Cref{fig:gene-bookmia}. Interestingly, in 3/12 cases, \ourmethod (which is reference-free) even surpasses reference-based baselines, as indicated via a superscript of $^*$. We also observe particularly strong transfer across differently sized LLM architectures within the same family and highlight the surprising positive transfer from the largest LlaMa to Pythia models.

\xhdr{Transfer Learning across LLMs and Datasets for HD} Although \ourmethod delivered non-trivial generalization, its zero-shot application on HD was not sufficient to surpass the simpler probability-based techniques. 
This led us to investigate \ourmethod capabilities in a transfer learning setting. Specifically, we fix an LLM/dataset combination and fine-tune the corresponding pretrained \ourmethod either on the remaining LLMs for the same dataset, or the remaining datasets for the same LLM. All Test AUCs of our fine-tuned \ourmethod's are in \Cref{fig:cross-llm-gen-HD-app,fig:cross-data-gen-HD-app}, \Cref{app:Extended Transferability Experiments}, while we report here two representative plots (see~\Cref{fig:transfer_hd_main}). On these heatmaps, superscript `$^*$' indicates the fine-tuned \ourmethod is better than a counterpart trained from scratch in the same setting -- testing for successful transfer; \textbf{bold} indicates it outperforms the best non-learnable probability-based method.

\xhdr{Discussion} First, \ourmethod exhibits solid transferability in both scenarios. The finetuned models consistently outperform their counterparts trained from scratch: 16/18 cases in both the cross-LLM (\Cref{fig:cross-llm-gen-HD-app}, \Cref{app:Extended Transferability Experiments}) and cross-dataset setups (\Cref{fig:cross-data-gen-HD-app}, \Cref{app:Extended Transferability Experiments}) -- see `$^*$' on the off-diagonal entries. This highlights a generally positive transfer of \ourmethod's learned representations across datasets and LLMs, and underscores the suitability of LOS as a data type in capturing generalizable patterns for HD. Second, from a practical perspective, we find that \ourmethod outperforms the best probability-based baseline in 15/18 cases for both the cross-LLM (\Cref{fig:cross-llm-gen-HD-app}) and cross-dataset (\Cref{fig:cross-data-gen-HD-app}) scenarios -- see \textbf{bold} on the off-diagonal entries. Focusing on the IMDB dataset, when training on L-3-8b and testing on Mis-7b (\Cref{fig:transfer_hd_main} (bottom)), our model substantially gains around 27 AUC units over the best probability-based baseline. This result underscores the possibility of transferring across LLMs. A similar trend is observed in the cross-dataset setup (\Cref{fig:transfer_hd_main} (top)): on Mis-7b, when training on HotpotQA or Movies and testing on IMDB, our model achieves a notable improvement of around 30 AUC units compared to the best baseline).

\subsection{Run-Time Analysis} 
\label{subsec:Run-Time}

To empirically assess the efficiency of our approach, we ran a comprehensive set of training and inference timings, reported in full in \Cref{app:Empirical Run-Times} and discussed in the following. The results clearly show \ourmethod features an extremely contained detection latency: $\sim 10^{-5}$s per inference fwd-pass. Training is also efficient, typically completing in under one hour on a single NVIDIA L-40 GPU, and often taking significantly shorter. To contextualize this computational efficiency w.r.t.\ methods relying on multiple prompting/generations \cite{orgad2024llms,kuhn2023semanticuncertaintylinguisticinvariances}, we measured the detection latency of \colorbox{orange!10}{Semantic Entropy} (SE)~\citep{Farquhar2024,kuhn2023semanticuncertaintylinguisticinvariances}, a pioneering method for HD. SE involves generating 10 additional responses and their semantic clustering, operated by checking mutual entailment with an auxiliary language model. On average, we measured $7.14\pm2.97$ seconds per sample detection on Movies/L-3-8b and $7.55\pm1.70$ seconds on Movies/Mis-7b. This is five orders of magnitude slower than \ourmethod, making the latter preferable for both accuracy and latency. Results on other LLM/dataset combinations are reported in \Cref{app:Empirical Run-Times}.

\subsection{The value of \texorpdfstring{$K$}{K} and restricted TDS access}
\label{subsec:k-ablation}
We conclude this section by presenting results on the impact of the parameter $K$, as defined in \Cref{eq: X'}. In particular, we slide $K$ in $\{10, 50, 100, 500, 1000\}$ and discuss here results for the Mis-7B LLM on the HotpotQA dataset, see \Cref{tab:K_ablation_MIS_HQA}. For each of the above values, we measure two quantities: the average probability mass captured, i.e., $\nicefrac{\sum_{n=1}^{N}\left(\sum_{v=1}^{V}(\mathbf{X}'_{n,v})\right)}{N}$, and the corresponding performance of \ourmethod. 
\begin{wraptable}[11]{r}{0.4\linewidth} 
\centering
\footnotesize
\caption{Ablation study on $K$ in \ourmethod. Average Probability Mass (APM) and Test AUC for varying $K$ on Mis-7B - HotpotQA.}
\label{tab:K_ablation_MIS_HQA}
\begin{tabular}{l|cc}
\hline
\textbf{K} & \textbf{APM (\%)} & \textbf{Test AUC} \\
\hline
10   & 99.49 & 71.82 $\pm$ 0.15 \\
50   & 99.80 & 71.87 $\pm$ 0.24 \\
100  & 99.85 & 72.34 $\pm$ 0.20 \\
500  & 99.99 & 72.67 $\pm$ 0.32 \\
1000 & 99.99 & \textbf{72.92} $\pm$ 0.45 \\
\hline
\end{tabular}
\end{wraptable}
We observe that the former exceeds $0.99$ for all considered $K$'s, indicating that even values as small as $K=10$ are often sufficient to convey most of the information in the full TDS. In terms of performance, Test AUC tends to improve as $K$ values increase, though with diminishing returns beyond $K \geq 100$. As expected, the value $K = 1000$ tends to deliver the best performance overall, this confirmed by results on the other LLM-dataset combinations reported in \Cref{app:Ablation Study}. Given its extremely contained run-times (see above), this value appears to hit a sweet-spot optimizing performance and complexity. 
Most notably, however, even with $K = 10$, \ourmethod outperforms all baselines and matches the white-box \textit{Activation Probe}, highlighting the practical effectiveness of \ourmethod even in API-limited settings such as in GPT models -- at the time of writing, exposing only the top $K = 20$ output logits.

\section{Conclusions}\label{sec:conclusions}
We proposed \ourmethod, an efficient method to detect data contamination and hallucinations in LLMs by leveraging their output signatures (LOS), defined as the union of Token Distribution Sequences (TDS) and Actual Token Probabilities (ATP). \ourmethod\ consists of a lightweight attention-based model operating on an effective encoding of the LOS. We proved it unifies and extends existing gray-box methods under a general framework, and experimentally showed it outperforms state-of-the-art gray-box methods across datasets and LLMs. It also exhibited promising generalization and transfer capabilities of \ourmethod, both across datasets and across LLMs. Our framework could be applied to other tasks, such as detecting LLM-generated content. Additional sources of information can also be incorporated, e.g., in the absence of latency constraints, it can be interesting to include ``exact-token'' flags as proposed by~\cite{orgad2024llms}. Last, the LOS can be extended to account for multiple prompting~\citep{kuhn2023semanticuncertaintylinguisticinvariances}.

\clearpage

\subsubsection*{Acknowledgements}
The authors are grateful to Beatrice Bevilacqua for insightful discussions. G.B.\ is supported by the Jacobs Qualcomm PhD Fellowship. F.F.\ conducted this work supported by an Aly Kaufman and an Andrew and Erna Finci Viterbi Post-Doctoral Fellowship. Y.G.\ is supported by
the UKRI Engineering and Physical Sciences Research Council (EPSRC) CDT in Autonomous and Intelligent Machines and Systems (grant reference EP/S024050/1). H.M.\ is a Robert J.\ Shillman Fellow and is supported by the Israel Science Foundation through a personal grant (ISF 264/23) and an equipment grant (ISF 532/23). D.L.\ is funded by an NSF Graduate Fellowship. Research was also supported by the Israeli Ministry of Science, Israel-Singapore binational grant 207606. F.F.\ is extremely grateful to the members of the ``Eva Project'', whose support he immensely appreciates.

\bibliographystyle{plainnat}
\bibliography{main}

\clearpage

\appendix

\section{Proofs}
\label{app:Proofs}
\ThresholdImplementation*

\begin{proof}
    We define $\mathcal{D} \coloneqq \mathcal{X} \times \mathcal{M}.$
    Recall that the target function we want to approximate is the gated scoring function $R$ as defined in \Cref{eq:GSF}, which can be written as follows:
    \begin{equation}
        R(x) = \sum_{i=1}^{N_\text{max}} \mathbb{I}(\kappa(x)_i \geq T(x)) \cdot g(x)_i, \label{eq:target_function}
    \end{equation}
    for $x \in \mathcal{D}$.

    Define $f^{(1)} : \mathcal{D} \rightarrow \mathbb{R}^{N_\text{max}}$ to be the components of the sum in \Cref{eq:target_function}:
    \begin{equation}
        f^{(1)}(x)_i = \mathbb{I}(\kappa(x)_i \geq T(x)) \cdot g(x)_i.
    \end{equation}

    It follows that $R(x) = \sum_{i=1}^{N_{\text{max}}} f^{(1)}(x)_i$.

    \textbf{Step 1:} We begin by selecting \(K = V_\text{max}\) as a hyperparameter\footnote{For LLMs with a vocabulary size smaller than \(V_\text{max}\), appropriate padding can be applied.} and initializing the parameters \(\mathbf{p}_1\), \(\mathbf{p}_2\), and \(\mathbf{W}\) as follows:
    \begin{align}
        \mathbf{p}_1 &= 0, \\
        \mathbf{p}_2 &= 1, \\
        \mathbf{W} &= I_{K \times K}.
    \end{align}
    As a result, the input to the transformer encoder in our architecture (see \Cref{eq: arch}) becomes \(\mathbf{X}' || \mathbf{p} \in \mathbb{R}^{N_\text{max} \times (V_\text{max} + 1)}\).
    
    This simplifies our architecture in \Cref{eq: arch} to:
    \begin{equation}
        h_\theta (\mathbf{X}, \mathbf{p}) = \mathcal{T} (\mathbf{X}' || \mathbf{p}).
    \end{equation}

    \textbf{Step 2: $\mathbf{f^{(1)} \in L^1(\mathcal{D})}$.}  
    Define the $L^1(\mathcal{D})$ norm for a field $\mathcal{F}: \mathcal D \to \mathbb{R}^{n_2}$ as:
    \begin{align}
        \norm{\mathcal{F}}_{L^1} & = \int_{x \in \mathcal{D}} \norm{\mathcal{F}(x)}_{1} \, dx = 
        \int_{x \in \mathcal{D}} \sum_{i=1}^{n_2}{|\mathcal{F}(x)_i|} \, dx  \nonumber \\ 
        & =
        \sum_{i=1}^{n_2} \int_{x \in \mathcal{D}} |\mathcal{F}(x)_i| \, dx  =
        \sum_{i=1}^{n_2} \norm{\mathcal{F}(x)_i}_{L^1},
    \end{align}
    where $\|v\|_1 = \sum_{i=1}^{n_2} |v_i|$ is the $l_1$ norm of the vector $v$. 
    
    Next, observe that $f^{(1)} \in L^1(\mathcal{D})$. To see this, first note that $f^{(1)}$ is measurable. The indicator function is measurable because the indicator set is the preimage of the measurable function $\kappa(x) - T(x)$ on the closed set $[0, \infty)$. Thus, $f^{(1)}$, being a product of measurable functions, is measurable.
    Next, we show that the $L^1$ norm is finite. This is true because $f^{(1)}$ is a product of the integrable function $g$ and the bounded function $\mathbf{1}$ on the compact domain $\mathcal D$. 

    \textbf{Step 3: Approximating $\mathbf{f^{(1)}}$ by a continuous field $\mathbf{\tilde{f}^{(1)}}$.}
    We need to approximate the field $f^{(1)} : \mathcal{D} \rightarrow \mathbb{R}^{N_\text{max}}$ by a continuous field, so that we can apply existing results on approximating continuous functions with Transformers. We state the following Lemma, saying the continuous fields are dense in $L_1(\mathcal{D})$.

    \begin{lemma}
        For any $g \in L^1(\mathcal D)$ and any $\epsilon > 0$, there exists a continuous $\tilde g \in L^1(\mathcal D)$ such that $\norm{g - \tilde g}_{L^1} < \epsilon$.
    \end{lemma}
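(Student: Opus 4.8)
The plan is to prove what is simply the classical statement that continuous functions are dense in $L^1$ over a domain of finite measure. Since $\mathcal{D} = \mathcal{X} \times \mathcal{M}$ is compact, its Lebesgue measure $\mu(\mathcal{D})$ is finite, so all the standard regularity machinery applies. I would run a two-stage approximation: first truncate $g$ to a bounded function, then replace that bounded function by a continuous one agreeing with it off a set of small measure, via Lusin's theorem. It suffices to treat scalar-valued $g$; the vector-valued case that is actually needed for $f^{(1)} : \mathcal{D} \to \mathbb{R}^{N_\text{max}}$ follows immediately from the componentwise decomposition of $\norm{\cdot}_{L^1}$ established above, by approximating each of the coordinates to within $\epsilon$ divided by the number of coordinates and summing.

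First I would truncate. For $M > 0$ set $g_M \coloneqq \max(-M, \min(M, g))$, which is measurable and bounded by $M$. Since $|g - g_M| \le 2|g| \in L^1(\mathcal{D})$ and $g_M \to g$ pointwise as $M \to \infty$, dominated convergence gives $\norm{g - g_M}_{L^1} \to 0$. Hence I may fix $M$ large enough that $\norm{g - g_M}_{L^1} < \epsilon / 2$.

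Next I would invoke Lusin's theorem. Because $\mathcal{D}$ has finite measure and $g_M$ is measurable, for any $\delta > 0$ there is a continuous $\tilde{g}$ on $\mathcal{D}$ with $\sup_{\mathcal{D}} |\tilde{g}| \le M$ — the bound is preserved by composing the continuous approximant with the retraction $t \mapsto \max(-M, \min(M, t))$, which does not destroy continuity — such that the set $E \coloneqq \{x \in \mathcal{D} : g_M(x) \neq \tilde{g}(x)\}$ satisfies $\mu(E) < \delta$. On $E$ we have $|g_M - \tilde{g}| \le 2M$, so
\[
\norm{g_M - \tilde{g}}_{L^1} = \int_E |g_M - \tilde{g}| \, dx \le 2M\,\mu(E) < 2M\delta,
\]
and choosing $\delta < \epsilon / (4M)$ makes this smaller than $\epsilon/2$. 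The triangle inequality then yields $\norm{g - \tilde{g}}_{L^1} \le \norm{g - g_M}_{L^1} + \norm{g_M - \tilde{g}}_{L^1} < \epsilon$, and $\tilde{g} \in L^1(\mathcal{D})$ since it is continuous and bounded on the finite-measure set $\mathcal{D}$.

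The main obstacle is precisely the control of the magnitude of the continuous approximant on the exceptional set where it disagrees with $g_M$: Lusin's theorem by itself only bounds the \emph{measure} of disagreement, so without the preliminary truncation — and the accompanying sup-bound $|\tilde g| \le M$ — there would be no way to bound $\int_E |g_M - \tilde{g}|$. Everything else, namely measurability of $g_M$, finiteness of $\mu(\mathcal{D})$ from compactness, and continuity of $\tilde g$ on the compact domain, is routine.
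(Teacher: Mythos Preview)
Your proof is correct. Both you and the paper reduce the vector-valued statement to the scalar case by approximating each coordinate function to within $\epsilon$ divided by the number of coordinates and summing via the componentwise decomposition of $\norm{\cdot}_{L^1}$. The difference is only in how the scalar case is handled: the paper simply invokes ``continuous functions are dense in $L^1$ for scalar valued functions'' as a known fact, whereas you supply a complete self-contained argument via truncation plus Lusin's theorem. Your version is more elementary and makes explicit where compactness of $\mathcal{D}$ (hence finite measure) is used, at the cost of some extra length; the paper's version is terser but relies on the reader accepting the scalar density result off the shelf. Neither approach offers a genuine mathematical advantage over the other here --- the lemma is standard either way.
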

    \begin{proof}
        Consider the coordinate functions $g_i: \mathcal D \to \RR$. Since continuous functions are dense in $L^1$ for scalar valued functions, we can choose continuous $\tilde g_i$ such that $\norm{g - \tilde g}_{L^1} < \epsilon/N$. Thus, letting $\tilde g(x) = [g_1(x), \ldots, g_N(x)] \in \RR^N$, it holds that $\norm{g - \tilde g} = \sum_{i=1}^N \norm{g_i - \tilde g_i} < \epsilon$.
    \end{proof}
    Thus, we can choose a function $\tilde f^{(1)}$ such that,
    \begin{equation}
        \norm{f^{(1)} - \tilde f^{(1)}} < \frac{\epsilon}{2N_{\text{max}}}.
    \end{equation}

    \textbf{Step 4: Approximating the continuous field $\mathbf{\tilde{f}^{(1)}}$ by a transformer model $\mathbf{h^{(1)}_\theta}$.}
    We start by restating the following from \cite{yun2019transformers} in our context,
    \begin{theorem}
    \label{thm: transApp}
    Let $1 \leq p < \infty$ and $\epsilon > 0$, then for any given $f \in \mathcal{F_\text{CD}}$, where $\mathcal{F_\text{CD}}$ is the set of all continuous functions that map a compact domain in $\mathbb{R}^{n \times d}$ to $\mathbb{R}^{n \times d} $,  there exists a Transformer network (with positional encodings) $g: \mathbb{R}^{n \times d} \rightarrow \mathbb{R}^{n \times d}$ such that we have $\norm{f-g}_{L^p} \leq \epsilon$.
    \end{theorem}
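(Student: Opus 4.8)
The plan is to follow the three-stage approximation strategy that is standard for establishing universality of sequence-to-sequence architectures. The target is a continuous $f$ on a compact domain $\mathbb{D} \subseteq \mathbb{R}^{n \times d}$, and I want to reach it with a \emph{standard} Transformer (softmax attention, ReLU feed-forward layers, positional encodings) by passing through two intermediate objects: a piecewise-constant surrogate $\bar f$, and a \emph{modified} Transformer $\bar g$ that is allowed hardmax attention and a richer class of piecewise-linear activations. Chaining triangle inequalities, $\norm{f - g}_{L^p} \leq \norm{f - \bar f}_{L^p} + \norm{\bar f - \bar g}_{L^p} + \norm{\bar g - g}_{L^p}$, so it suffices to make each of the three gaps at most $\epsilon/3$.

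Step 1 (quantization). First I would cover $\mathbb{D}$ by a uniform grid of cells of side $\delta$ and define $\bar f$ to be constant on each cell, equal to the value of $f$ at the cell's representative point. Since $f$ is continuous on a compact set it is uniformly continuous, so choosing $\delta$ small makes $\norm{f - \bar f}_{L^\infty}$, and hence (the domain being bounded) $\norm{f - \bar f}_{L^p}$, smaller than $\epsilon/3$. This reduces the problem to exactly representing a function that takes finitely many values on finitely many grid cells.

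Step 2 (the modified Transformer realizes $\bar f$ exactly, the crux). The heart of the argument is to build $\bar g$ reproducing $\bar f$ on grid inputs, which decomposes into three sub-constructions. First, a stack of token-wise feed-forward layers quantizes each entry of the input matrix onto the grid; the piecewise-linear (step-type) activations let me snap coordinates to grid values. Second, a stack of hardmax self-attention layers implements a \emph{contextual mapping}: a map assigning to each quantized input a collection of token representations that are all distinct \emph{and} uniquely encode the entire input sequence, not merely the individual token. Positional encodings are essential here, since without them the architecture is permutation equivariant and cannot represent arbitrary sequence-to-sequence functions; the construction is assembled from a \emph{selective shift} operation that, by thresholding attention, moves each token's value by an amount depending on the other tokens, yielding a globally injective code. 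Third, a final stack of token-wise feed-forward layers reads off these unique codes and outputs the prescribed value of $\bar f$, using that a map defined on finitely many distinct inputs can be implemented exactly by piecewise-linear networks. Because the initial quantization collapses every point of a cell to that cell's representative, $\bar g$ agrees with $\bar f$ on the full-measure union of cell interiors, so $\norm{\bar f - \bar g}_{L^p} = 0$.

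Step 3 (standard Transformer approximates the modified one). Finally I would replace the non-standard components by their standard counterparts and bound the incurred error. The hardmax operators are recovered as limits of softmax attention with a large inverse-temperature (scaling the query/key weights), and the piecewise-linear step activations are uniformly approximated on compact ranges by ReLU networks. Since the compactness of $\mathbb{D}$ and the Lipschitz behavior of each fixed layer keep all intermediate quantities in bounded ranges, these pointwise approximations are made uniform and then propagated through the finite, fixed network to yield $\norm{\bar g - g}_{L^p} < \epsilon/3$. I expect the contextual-mapping / selective-shift construction in Step 2 to be the main obstacle: proving that finitely many hardmax attention layers can produce a representation that is injective on the set of quantized sequences, while respecting the token-wise-plus-attention structure of the architecture, is the genuinely combinatorial part, whereas Steps 1 and 3 are essentially soft-analysis boilerplate.
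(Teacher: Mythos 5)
Your proposal is correct, but it is worth being clear about what the paper itself does here: Theorem~\ref{thm: transApp} is not proved in the paper at all --- it is restated verbatim from \citet{yun2019transformers} and invoked as a black box inside the proof of \Cref{prop:ThresholdImplementation}. What you have written is a faithful reconstruction of the original three-stage argument in that reference: (i) uniform-continuity quantization to a piecewise-constant $\bar f$; (ii) exact realization of $\bar f$ by a modified Transformer with hardmax attention and piecewise-linear activations, where the selective-shift contextual mapping produces token codes injective on the set of quantized sequences and positional encodings break permutation equivariance; (iii) replacement of hardmax by low-inverse-temperature softmax and of step activations by ReLU, with errors propagated through the fixed finite depth on the compact domain. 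You also correctly identify why the guarantee is $L^p$ rather than $L^\infty$: the softmax-for-hardmax and boundary-of-cell discrepancies are only controllable on a set of small measure, which is likewise the reason the paper's downstream result (\Cref{prop:ThresholdImplementation}) is stated in $L_1$ and explicitly cannot be upgraded to $L_\infty$. In short, your proof and the paper's ``proof'' are not two approaches to compare --- yours is the proof of the cited result, the paper's is the citation --- and your sketch matches the source accurately, including the correct identification of the contextual-mapping construction as the genuinely combinatorial core.
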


    To apply this theorem in our context, we observe that in our case $d \coloneqq V_\text{max}+1$ and $n \coloneqq N_\text{max}$ for the input space, and the domain $\mathcal{D} \subseteq \mathbb{R}^{N_\text{max} \times (V_\text{max} +1)}$ is compact. Thus $\tilde{f}^{(1)} \in \mathcal{F_\text{CD}}$ (note that the output space dimension in our case is $\RR^{N_\text{max} \times 1}$ instead of $\RR^{N_\text{max} \times d}$, but this can be handled using zero-padding). Using $p=1$, it holds that there exists a transformer $h_\theta^{(1)} $ s.t., $\norm{h_\theta^{(1)} - \tilde{f}^{(1)}} < \frac{\epsilon}{2N_\text{max}}$. 

    \textbf{Step 5: Pooling.} 
    Our model concludes with a [CLS] token pooling mechanism, which is equivalent in expressiveness to the standard sum pooling method. Thus, assuming that the final layer of our model is given by $h^{(1)}_\theta(x)$, our model can be written as follows,

    \begin{equation}
        h_\theta(x) =    \sum_{i=1}^{N_\text{max}} \left( h^{(1)}_\theta(x)_{i} \right).
    \end{equation}

    \textbf{Step 6: Approximating the objective function.} Intuitively, $h_\theta(x)$ approximates $R(x)$ because $h^{(1)}_\theta(x)_i$ approximates $f^{(1)}(x)_i$.

We demonstrate this as follows.

    \begin{align}
        \norm{h_{\theta} - R}_{L_1} &= \norm{\sum_{i=1}^{N_{\text{max}}}  \left( h^{(1)}_{\theta;i} \right) - \sum_{i=1}^{N_{\text{max}}} f^{(1)}_i }_{L_1}\\
        & \leq \sum_{i=1}^{N_{\text{max}}} \norm{h^{(1)}_{\theta; i} - f^{(1)}_i}\\
        & = \sum_{i=1}^{N_{\text{max}}} \norm{h^{(1)}_{\theta; i} + (\tilde{f}^{(1)}_{i} - \tilde{f}^{(1)}_{i}) - f^{(1)}_i}  \\
        & \leq \sum_{i=1}^{N_{\text{max}}} \norm{h^{(1)}_{\theta; i} - \tilde{f}^{(1)}_{i}} + \sum_{i=1}^{N_{\text{max}}} \norm{\tilde{f}^{(1)}_{i} - f^{(1)}_i}
    \end{align}
    We applied the triangle inequality to obtain the two inequalities. Next, note that for a field $\mathcal F: \RR^{n_1} \to \RR^{n_2}$, the $L^1$ norm of any coordinate function is less than the $L^1$ norm of $\mathcal F$: $\norm{\mathcal F_j}_{L^1} \leq \norm{\mathcal F}_{L^1}$ for any $j \in \{1, \ldots, n_2\}$. This can be seen directly from the definition of the $L^1$ norm of $\mathcal F$. Combining this with our choices of $\tilde f$ and $h_\theta$ shows that:
\begin{align}
        & \sum_{i=1}^N \norm{h^{(1)}_{\theta; i} - \tilde{f}^{(1)}_{i}} + \sum_{i=1}^{N_{\text{max}}} \norm{\tilde{f}^{(1)}_{i} - f^{(1)}_i} \\
        & <  \sum_{i=1}^{N_{\text{max}}} \frac{\epsilon}{2N_\text{max}} + \sum_{i=1}^{N_{\text{max}}} \frac{\epsilon}{2N_\text{max}}\\
        & = \epsilon.
\end{align}
In total, this means that $\norm{h_{\theta} - R}_{L_1} < \epsilon$, so we are done.
\end{proof}

\GSFsCaptureBaselines*
\begin{proof}
    We will prove the Proposition by defining, for each baseline, the functions implementing components $\kappa, T, g$, assuming no ties in the ATP values $\mathbf{p}$.

    \textbf{Mean Aggregated Probability.} This baseline simply outputs the mean across the ATPs $\mathbf{p}$.
    The following selection of functions implements it as a GFS:
    $$
        \kappa(\mathbf{X}', \mathbf{p}) = \mathbf{1} \quad
        T(\mathbf{X}', \mathbf{p}) = 0 \quad
        g(\mathbf{X}', \mathbf{p}) = \frac{1}{N} \mathbf{p}
    $$

    \textbf{Min Aggregated Probability} outputs the min value across the ATPs $\mathbf{p}$.
    The following selection of functions implements it as a GFS:
    $$
        \kappa(\mathbf{X}', \mathbf{p}) = - \mathbf{p} \quad
        T(\mathbf{X}', \mathbf{p}) = - \min(\mathbf{p}) \quad
        g(\mathbf{X}', \mathbf{p}) = \mathbf{p}
    $$

    \textbf{Max Aggregated Probability} outputs the max value across the ATPs $\mathbf{p}$.
    We simply pick:
    $$
        \kappa(\mathbf{X}', \mathbf{p}) = \mathbf{p} \quad
        T(\mathbf{X}', \mathbf{p}) = \max(\mathbf{p}) \quad
        g(\mathbf{X}', \mathbf{p}) = \mathbf{p}
    $$

    \textbf{MinK\%.} Please refer to Section~\ref{subsec:TDSnet provably generalizes previous approaches}.

    \textbf{MinK\%++.} Let $\bar{\mathbf{p}} = \frac{\log(\mathbf{p}) - \boldsymbol{\mu}}{\boldsymbol{\sigma}}$, be the normalized version of $\mathbf{p}$, with:
    \begin{align}
        \boldsymbol{\mu}_i &= \mathbb{E}_{\mathbf{X}'_i} [\log(\mathbf{X}'_i)] = \sum_{v=1}^{V} \mathbf{X}'_{i,v} \cdot \log(\mathbf{X}'_{i,v}), \nonumber \\
        \boldsymbol{\sigma}_i &= \sqrt{\mathbb{E}_{\mathbf{X}'_i} [(\log(\mathbf{X}'_i) - \boldsymbol{\mu}_i)^2]} \nonumber \\
        &= \sqrt{\sum_{v=1}^{V} \mathbf{X}'_{i,v} \cdot \big(\log(\mathbf{X}'_{i,v}) - \boldsymbol{\mu}_i\big)^2},
    \end{align}
    Where $\mathbf{X}'$ is given from \Cref{eq: X'}.

    The baseline is implemented by setting:
    \begin{align}
        T(\mathbf{X}', \mathbf{p}) &= - \text{perc}(\bar{\mathbf{p}}, K) = - \big(\text{sort}(\bar{\mathbf{p}})_{\left\lceil \frac{K}{100} \cdot N \right\rceil} \big), \nonumber \\
        \kappa(\mathbf{X}', \mathbf{p}) &= -\bar{\mathbf{p}}, \quad g(\mathbf{X}', \mathbf{p}) = \frac{\bar{\mathbf{p}}}{\left\lceil \frac{K}{100} \cdot N \right\rceil}. \nonumber
    \end{align}

    \textbf{Loss as a Privacy Proxy \cite{yeom2018privacy}.} This method uses the model's negated loss as a proxy for contamination, which can be defined as the average of the log ATPs. The method can thus be implemented with:
    \begin{align}
        \kappa(\mathbf{X}', \mathbf{p}) = \mathbf{1}, \quad T(\mathbf{X}', \mathbf{p}) = 0, \quad g(\mathbf{X}', \mathbf{p}) = \frac{1}{N}\log(\mathbf{p}).
    \end{align}
\end{proof}

\BaselinesApprox*
\begin{proof}
    
    To prove \Cref{prop:BaselinesApprox}, it suffices to show the following. First (i), that the baselines can be implemented as in \Cref{eq:GSF}, given their sequence length and vocabulary size satisfy, $N \leq N_{\text{max}}, \quad V \leq V_{\text{max}}$, where values in the inputs for indices larger than $N, V$ are `padded' with e.g., $-1$. Second (ii), that their implementations are realized with $\kappa$, $T$, and $g$ which are all measurable, and with $g$ also integrable.
    
    \textbf{(i)} Let us slightly modify the implementations provided in the Proof for~\Cref{prop:GSFs_capture_baselines} to correctly account for padding values. Let us conveniently define:
    \begin{align}
        \alpha: \mathbb{R} \rightarrow \mathbb{R}, &~~~ \alpha(x) = 1 - \mathrm{ReLU}(-x) = \begin{cases}
            1 & x \geq 0 \\
            1 + x & x < 0
        \end{cases} \nonumber \\
        N_{\text{eff}} = \sum_{i=1}^{N_{\text{max}}} \alpha(\mathbf{p}_i) &\quad V_{\text{eff}} = \sum_{v=1}^{V_{\text{max}}} \alpha(\mathbf{X}_{1,v})
    \end{align}
    \noindent as well as the following function, which will help us `manipulate' the padding value in order not to interfere with the effective computations required by baselines:
    \begin{align}
        \beta: \mathbb{R} \rightarrow \mathbb{R}, &\quad \beta(x; M,f) = \begin{cases}
            f(x) & x \geq 0 \\
            M & x = -1
        \end{cases}, M  > 0.
    \end{align}

    \textbf{Mean Aggregated Probability.}
    $$
        \kappa(\mathbf{X}', \mathbf{p}) = \mathbf{1} \quad
        T(\mathbf{X}', \mathbf{p}) = 0 \quad
        g(\mathbf{X}', \mathbf{p}) = \frac{1}{N_{\text{eff}}} \mathbf{p} \circ \alpha(\mathbf{p}),
    $$
    where $\circ$ denotes the hadamard (element-wise) product.

    \textbf{Min Aggregated Probability.}
    \begin{align*}
        \kappa(\mathbf{X}', \mathbf{p}) &= - \beta(\mathbf{p}) \\
        T(\mathbf{X}', \mathbf{p}) &= - \min(\beta(\mathbf{p})) \\
        g(\mathbf{X}', \mathbf{p}) &= \mathbf{p} \\
        M &= 2, \\ 
        f &\equiv \text{id.}
    \end{align*}

    \textbf{Max Aggregated Probability.}
    $$
        \kappa(\mathbf{X}', \mathbf{p}) = \mathbf{p} \quad
        T(\mathbf{X}', \mathbf{p}) = \max(\mathbf{p}) \quad
        g(\mathbf{X}', \mathbf{p}) = \mathbf{p}
    $$

    \textbf{MinK\%.}
    \begin{align*}
        \kappa(\mathbf{X}', \mathbf{p}) &= - \beta(\mathbf{p}) \\
        T(\mathbf{X}', \mathbf{p}) &= - \big(\text{sort}(\beta(\mathbf{p}))_{\left\lceil \frac{K}{100} \cdot N_{\text{eff}} \right\rceil} \big) \\
        g(\mathbf{X}', \mathbf{p}) &= \frac{\log(\beta(\mathbf{p}))}{\left\lceil \frac{K}{100} \cdot N_{\text{eff}} \right\rceil} \\
        M &= 2, \\ f &\equiv \text{id.}
\end{align*}
    where the note the application of $\beta$ inside the $\log$ prevents negative inputs.

    \textbf{MinK\%++.} Before illustrating how this baseline is implemented, we note the following. In order for the normalization of log-probs to be well-defined, it is required that: (1) $\mathbf{\mu}$ is finite, (2) the denominator is greater than $0$. As for (1), we note that null probability values ($X_{i,v}=0$) would be problematic, as they would cause the $\log$ function to output $-\infty$. We assume, in this case, that all probability values lie in $[\epsilon_1, 1]$, with $\epsilon_1$ being a small value such that $0 < \epsilon_1 < 1$. Regarding (2), we see that the problematic situation would occur in cases where the probability distribution is uniform. We assume to handle this case by adding a small positive constant $\epsilon_2 > 0$ in the denominator, so that the normalization would take the form: $\bar{\mathbf{p}} = \frac{\log(\mathbf{p}) - \boldsymbol{\mu}}{\boldsymbol{\sigma} + \epsilon_2}$.
    
    Under these assumptions, we define the following $\beta$ functions:
    \begin{align*}
        \beta_1 &= \beta(\cdot; 2, \text{id.}) \\ \beta^{i}_{2} &= \beta(\cdot; -\frac{2\log(\epsilon_1)}{\epsilon_2}, f_i), \\ 
        f_i(x) &= \frac{\log(x) - \boldsymbol{\mu}_i}{\left\lceil \frac{K}{100} \cdot N_{\text{eff}} \right\rceil \boldsymbol{\sigma}_i + \epsilon_2}
    \end{align*}
    where we note that $-\frac{2\log(\epsilon_1)}{\epsilon_2}$ upper-bounds all the possible values that can be attained by $f_i$'s under our assumptions.

    At this point, we observe that the values $\boldsymbol{\mu}_i, \boldsymbol{\sigma}_i$ can be correctly obtained as follows, in a way that is not influenced by our padding scheme:
    \begin{align}
        \boldsymbol{\mu_i} &= \sum_{v} \alpha(\mathbf{X}'_{i, v}) \cdot \mathbf{X}'_{i,v} \log\left(\beta_1(\mathbf{X}'_{{iv}})\right) \\
        \boldsymbol{\sigma_i} &= \sqrt{\sum_v \alpha(\mathbf{X}'_{i, v}) \cdot \mathbf{X}'_{i, v}\left(\log(\beta_1(\mathbf{X}')_{i, v}) - \boldsymbol{\mu}_i\right)^2}
    \end{align}

    At this point, let $\boldsymbol{\beta}_2(\mathbf{p})_i = \beta^{i}_2(\mathbf{p}_i)$. We set:
    \begin{align*}
        \kappa(\mathbf{X}', \mathbf{p}) &= - \boldsymbol{\beta}_2(\mathbf{p}) \\
        T(\mathbf{X}', \mathbf{p}) &= - \big(\text{sort}(\boldsymbol{\beta}_2(\mathbf{p}))_{\left\lceil \frac{K}{100} \cdot N_{\text{eff}} \right\rceil} \big) \\
        g(\mathbf{X}', \mathbf{p}) &= \frac{\boldsymbol{\beta}_2(\mathbf{p})}{\left\lceil \frac{K}{100} \cdot N_{\text{eff}} \right\rceil}
    \end{align*}
    \noindent and note that the $K$-th percentile in $T$ is correctly computed despite the padding values due to the specific choice of $M$ in $\beta_2$'s.

    \textbf{Loss as a Privacy Proxy \cite{yeom2018privacy}.} 
        \begin{align}
        \kappa(\mathbf{X}', \mathbf{p}) = \mathbf{1}, \quad T(\mathbf{X}', \mathbf{p}) = 0, \quad g(\mathbf{X}', \mathbf{p}) = \frac{1}{N_\text{eff}}\log(\mathbf{p}).
    \end{align}

    \textbf{(ii)} We now proceed to show that the implementations above are obtained via measurable functions $\kappa$, $T$, and a measurable and integrable function $g$, which completes the proof.  
    
    \textbf{Step 1:} Consider a fixed sequence length $N' \in [N_{\text{max}}]$ and a fixed vocabulary size $V \in [V_{\text{max}}]$. When restricted to these parameters, all relevant functions are continuous. This follows from the fact that each function, when restricted in this manner, is composed of continuous functions.  
    
    \textbf{Step 2:} The input domain for each combination of sequence length $N' \in [N_{\text{max}}]$ and vocabulary size $V \in [V_{\text{max}}]$ forms a compact set, and the union of all of this domains is also compact (as a finite union of compact sets). Moreover, for any two distinct pairs $(N_1, V_1)$ and $(N_2, V_2)$, if either $N_1 \neq N_2$ or $V_1 \neq V_2$, then the corresponding domains are disjoint. 
    
    In most of our cases of interest, this follows from the fact that probabilities lie within $[0,1]$ and that padding is represented by $-1$. In other cases, e.g., the application of $\beta$, the sets might be different, but remain disjoint and compact.
    
    Thus, by the following lemma, all functions $\kappa, T, g$ for all baselines are continuous, completing the proof.

\begin{restatable}[]{lemma}{}\label{lem:general_continuity}
Let \( X \) be a subset of a metric space, which is compact, and can be expressed as a finite disjoint union of compact subsets \( X_{i} \) indexed by a finite set \( I \), i.e., 
\[
X = \bigsqcup_{i \in I} X_i.
\]

Suppose a function \( f: X \to \mathbb{R}^{n} \) is defined such that for each \( i \in I \), there is a continuous function
\[
g^{(i)}: X_i \to \mathbb{R}^{n}
\]
satisfying \( f|_{X_i} = g^{(i)} \). Then, \( f \) is continuous on \( X \).
\end{restatable}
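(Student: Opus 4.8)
The plan is to treat this as an instance of the gluing (pasting) lemma, exploiting the fact that a \emph{finite disjoint} union of compact pieces is especially well behaved. The crucial preliminary observation is topological: since $X$ is a subset of a metric space it is itself metric, hence Hausdorff, and in a Hausdorff space every compact subset is closed. Therefore each $X_i$ is closed in $X$. Because $I$ is finite and the union is disjoint, the complement of any single piece, $X \setminus X_{i} = \bigsqcup_{j \neq i} X_j$, is a \emph{finite} union of closed sets and hence closed; consequently each $X_i$ is also \emph{open} in $X$. Thus the family $\{X_i\}_{i \in I}$ is a finite partition of $X$ into clopen sets.

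Given this clopen structure, continuity follows locally and the argument is short. First I would fix an arbitrary point $x \in X$; by disjointness it lies in exactly one piece, say $X_{i_0}$. Since $X_{i_0}$ is open in $X$, there is an open neighborhood $U \subseteq X$ of $x$ with $U \subseteq X_{i_0}$, on which $f$ coincides with $g^{(i_0)}$. As $g^{(i_0)}$ is continuous on $X_{i_0}$, its restriction to $U$ is continuous, and hence $f$ is continuous at $x$. Since $x$ was arbitrary, $f$ is continuous on all of $X$. Equivalently, one can argue sequentially: for $x_n \to x$ the openness of $X_{i_0}$ forces $x_n \in X_{i_0}$ for all large $n$, so $f(x_n) = g^{(i_0)}(x_n) \to g^{(i_0)}(x) = f(x)$.

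The only real content---and the step I expect to be the main obstacle for a careful writeup---is establishing the clopen structure, i.e., that each compact piece is simultaneously closed (from Hausdorffness) and open (from finiteness and disjointness of the decomposition). Without finiteness this can fail, and without disjointness one would instead have to invoke the full pasting lemma and verify that the restrictions agree on overlaps; here that compatibility condition is vacuous. As an alternative to the local argument above, once closedness of the pieces is known one may directly cite the finite closed pasting lemma, since $X = \bigcup_{i \in I} X_i$ is a finite union of closed sets on each of which $f$ restricts to a continuous map with trivially compatible overlaps.
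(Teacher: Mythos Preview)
Your proof is correct and reaches the same conclusion as the paper, but via a different mechanism. The paper argues sequentially throughout: given $x^{(m)} \to x \in X_{i^*}$, it introduces the quantity $\delta^* = \tfrac{1}{2}\min_{i\neq j}\operatorname{dist}(X_i,X_j)$, appeals to the fact that disjoint compact sets in a metric space have strictly positive distance, and concludes that $x^{(m)}\in X_{i^*}$ for all large $m$, whence $f(x^{(m)})=g^{(i^*)}(x^{(m)})\to g^{(i^*)}(x)=f(x)$. You instead establish a clopen partition: each $X_i$ is closed in $X$ (compact in a Hausdorff space) and open (its complement is a finite union of closed sets), so the ``eventually in one piece'' fact follows immediately from openness, and the finite closed pasting lemma applies directly. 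Your route is a bit cleaner and more general---it needs only Hausdorffness rather than a metric, and it sidesteps the positive-distance computation---while the paper's argument is more explicitly metric and quantitative. Both are valid; your sequential remark at the end is essentially the paper's proof with the $\delta^*$ step replaced by the openness step.
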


The finite disjoint union of compact subsets correspond to all possible sequence lengths ($N' \in N_\text{max}$) and vocabulary sizes ($V' \in V_\text{max}$).
Below we provide the proof for \Cref{lem:general_continuity}.

\begin{proof}
Consider any point \( \mathbf{x} \in X \), and let \( (\mathbf{x}^{(m)}) \) be a sequence converging to \( \mathbf{x} \), in $X$. We need to show that 
\[
f(\mathbf{x}^{(m)}) \to f(\mathbf{x}) \quad \text{as } m \to \infty.
\]
Since \( X \) is a finite disjoint union of compact subsets \( X_i \), there exists an index \( i^* \) such that \( \mathbf{x} \in X_{i^*} \). 

Since the subsets \( X_i \) are disjoint and compact, there exists a positive minimum separation distance between distinct subsets, defined as,
\[
\delta^* = \frac{1}{2} \min_{i \neq j} \inf_{\mathbf{x} \in X_i, \mathbf{y} \in X_j} \|\mathbf{x} - \mathbf{y}\|.
\]
Since each \( X_i \) is compact and the index set is finite\footnote{\url{https://proofwiki.org/wiki/Distance\_between\_Disjoint\_Compact\_Set\_and\_Closed\_Set\_in\_Metric\_Space\_is\_Positive\#google\_vignette}}, this minimum distance is well-defined and strictly positive.

Because \( \mathbf{x}^{(m)} \to \mathbf{x} \), there exists an integer \( M \) such that for all \( m > M \), we have
\[
\|\mathbf{x}^{(m)} - \mathbf{x}\| < \delta^*.
\]
By the definition of \( \delta^* \), this ensures that for sufficiently large \( m \), the sequence \( \mathbf{x}^{(m)} \) remains in \( X_{i^*} \), i.e., \( \mathbf{x}^{(m)} \in X_{i^*} \) for all \( m > M \).

Since \( f \) coincides with \( g^{(i^*)} \) on \( X_{i^*} \), we have
\[
f(\mathbf{x}^{(m)}) = g^{(i^*)}(\mathbf{x}^{(m)}), \quad \text{for all } m > M.
\]
By assumption, \( g^{(i^*)} \) is continuous on \( X_{i^*} \), so
\[
g^{(i^*)}(\mathbf{x}^{(m)}) \to g^{(i^*)}(\mathbf{x}) \quad \text{as } m \to \infty.
\]
Since \( f(\mathbf{x}) = g^{(i^*)}(\mathbf{x}) \), it follows that
\[
f(\mathbf{x}^{(m)}) \to f(\mathbf{x}),
\]
which proves that \( f \) is continuous at \( \mathbf{x} \). Since \( \mathbf{x} \) was arbitrary, \( f \) is continuous on \( X \).
\end{proof}

\end{proof}

\section{Extended Experimental Section}
\label{app:exp}

\subsection{Experimental Details}
Our experiments were conducted using the PyTorch \cite{paszke2019pytorch} framework (License: BSD), using a single NVIDIA L-40 GPU for all experiments regarding \ourmethod. We use a fixed batch size of 64 for all the tasks and datasets, and a fixed value of 8 heads (except for the Movies\cite{orgad2024llms} dataset) in our light-weight transformer encoder for \ourmethod. Hyperparameter tuning was performed utilizing the Weight and Biases framework \cite{wandb} -- see \Cref{tab: Hyperparameters}.

\subsection{HyperParameters}
\label{app: HyperParameters}
In this section, we detail the hyperparameter search conducted for our experiments. We use the same hyperparameter grid for our main model, \ourmethod, and our proposed learning-based baselines, namely, \textsc{ATP+R-MLP}, \textsc{ATP+R-Transf.}. Additionally, we note that for a given dataset, we maintained the same grid search approach for all LLMs' LOSs that we have trained on. The hyperparameter search configurations for all datasets are presented in \Cref{tab: Hyperparameters}. The grid search optimizes for the AUC calculated on the validation set.

\begin{table*}[ht]
    \centering
  \caption{Hyperparameter search grid for \ourmethod.}
    \label{tab: Hyperparameters}
    \resizebox{1\textwidth}{!}{%
    \begin{tabular}{l|c|c|c|c|c|c} 
    \toprule
        Dataset & Num. layers & Learning rate & Embedding size & Epochs  & Dropout & Weight Decay  \\  
        \midrule 
        \textsc{HotpotQA} & $\{ 1,2 \}$ & $\{ 0.0001 \}$ & $\{ 128, 256 \}$ & $\{ 300 \}$  & $\{ 0, 0.3 \}$ & $\{ 0, 0.001 \}$ \\
        \textsc{IMDB} & $\{ 1,2 \}$ & $\{ 0.0001 \}$ & $\{ 128, 256 \}$ & $\{ 300 \}$  & $\{ 0, 0.3 \}$ & $\{ 0, 0.001 \}$ \\
        \textsc{Movies} & $\{1, 2\}$ & $\{ 0.0001 \}$ & $\{ 128, 256 \}$ & $\{ 300, 500 \}$ & $\{ 0.0, 0.3, 0.5 \}$  & $\{ 0, ,0.001, 0.005 \}$\\
        \midrule
        \textsc{WikiMIA (32/64)} & $\{ 1,2 \}$ & $\{ 0.0001 \}$ & $ \{ 128,256 \}$ & $ \{ 100, 500, 1000 \}$ & $\{ 0, 0.3 \}$ & $\{ 0, 0.001 \}$  \\
        \textsc{BookMIA} & $\{ 1,2 \}$ & $\{ 0.0001 \}$ &  $ \{ 64, 128 \}$ & $ \{ 500 \}$ & $ \{ 0, 0.3, 0.5 \}$ & $\{ 0, 0.001 \}$  \\
        \bottomrule
    \end{tabular}
    }
\end{table*}

\subsection{Optimizers and Schedulers} For all datasets we employ the AdamW optimizer \cite{loshchilov2017decoupled} paired with a Linear scheduler, using a warm up of 10\% of the epochs. We apply an early stopping criterion if there is no improvement in validation performance for 30 consecutive epochs.

\subsection{Our Baselines and Rank Encoding}
\label{app:our-baselines}

\xhdr{Rank Encoding}
We construct the following learnable rank encoding~\footnote{For the Wiki-MIA dataset, we used a lookup table for Rank Encoding, where the index corresponds to $r_i$ and the value is an embedding.},
\begin{equation}
\label{eq:PE}
    \text{RE}(\mathbf{X}, \mathbf{p}) = \mathbf{p} \odot \mathbf{r}^{\text{scaled}} \cdot \mathbf{w}_1 + \mathbf{p} \cdot \mathbf{w}_2,
\end{equation}
where $\odot$ is the hadamard product, and $\mathbf{w}_1, \mathbf{w}_2$ are learnable parameters in $\mathbb{R}^{d}$. As a result, $\text{RE}(\mathbf{X}, \mathbf{p})$ is in $\mathbb{R}^{N \times d}$. Importantly, the multiplication by $\mathbf{p}$ makes sure that the rank encoding and the TDS are in similar scales, especially when using log probabilities or logits.

\xhdr{Our baselines}
Below we present our additional learnable baselines. \textbf{ATP+R-Transf} is implemented as described in \Cref{eq: arch}, but without incorporating the TDS (\( \mathbf{X} \)), as follows:
\begin{equation}
    h_\theta(\mathbf{X}, \mathbf{p}) = \mathcal{T} \left(\text{RE}(\mathbf{X}, \mathbf{p})\right),
\end{equation}
where \( \mathcal{T} \) represents an encoder-only transformer architecture \cite{vaswani2017attention}. \textbf{ATP+R-MLP} is similar to \textbf{ATP+R-Transf.} but replaces the transformer with an MLP. Formally:
\begin{equation}
    h_\theta(\mathbf{X}, \mathbf{p}) = \text{MLP} \left(\text{RE}(\mathbf{X}, \mathbf{p})\right),
\end{equation}

\subsection{Dataset Description}

\subsubsection{Datasets for Hallucination Detection}
\label{app:Datasets for Hallucination Detection}
In this section, we provide an overview of the three datasets used in our hallucination detection analysis; we mostly follow the framework given in \cite{orgad2024llms} in constructing the datasets. Our aim was to ensure coverage of a wide variety of tasks, required reasoning skills, and dataset diversity. For each dataset, we highlight its unique contributions and how it complements the others.

For all datasets, we used a consistent split of 10,000 training samples and 10,000 test samples.

\begin{enumerate}

    \item \textbf{HotpotQA} \cite{yang2018hotpotqa} (License: CC-BY-SA-4.0): This dataset is specifically designed for multi-hop question answering and includes diverse questions that require reasoning across multiple pieces of information. Each entry comprises supporting Wikipedia documents that aid in answering the questions. For our analysis, we utilized the ``without context'' setting, where questions are posed directly. This setup demands both factual knowledge and reasoning skills to generate accurate answers. 

    \item \textbf{Movies} \cite{orgad2024llms} (License: MIT): This dataset checks for factual accuracy in scenarios regarding movies. LLMs are asked, in particular, who was the actor/actress playing a specific role in a movie of interest. This dataset contains 7857 test samples.

    \item \textbf{IMDB} (originally released with no known license by \citet{maas2011learning}): This dataset contains movie reviews designed for sentiment classification tasks. Following the approach outlined in \cite{orgad2024llms}, we applied a one-shot prompt to guide the large language model (LLM) in using the predefined sentiment labels effectively.

\end{enumerate}

\textbf{Annotation collection for HD.} Specifically, following \cite{orgad2024llms}, the dataset \(D = \{(q_i, z_i)\}_{i=1}^{\ell} \) contains \( {\ell} \) question-answer pairs, where \( q_i \) are questions and \( z_i \) are ground-truth answers. For each \( q_i \), the model generates a response \( \hat{z}_i \), with predicted answers \( \{\hat{z}_i\}_{i=1}^{\ell} \). The LOS for each response, \( \{ (\mathbf{X}, \mathbf{p})_i \}_{i=1}^{\ell} \), is saved. Correctness labels \( y_i \in \{0, 1\} \) are assigned by comparing \( \hat{z}_i \) to \( z_i \), resulting in the error-detection dataset \( \{ (\mathbf{X}, \mathbf{p})_i, y_i \}_{i=1}^\ell \). 

\textbf{LLMs.} We consider the following LLMs for our experiments on HD:
\begin{enumerate}

    \item \textbf{Mistral-7b-instruct-v0.2} \cite{jiang2023mistral} (License: Apache-2.0). Referred to as Mis-7b in the main text and accessed through the Hugging Face interface at \url{https://huggingface.co/mistralai/Mistral-7B-Instruct-v0.3}.

    \item \textbf{Llama-3-8b-Instruct} \citep{touvron2023llama} (License: Llama-3\footnote{\url{https://huggingface.co/meta-llama/Meta-Llama-3-8B/blob/main/LICENSE}}). Referred to as L-3-8b in the main text and accessed through the Hugging Face interface at \url{https://huggingface.co/meta-llama/Meta-Llama-3-8B-Instruct3}.

    \item \textbf{Qwen-2.5-7b-Instruct} (License: Apache-2.0): Referred to as Q-2.5-7b in the main text and accessed through the Hugging Face interface at \url{https://huggingface.co/Qwen/Qwen2.5-7B-Instruct}.

\end{enumerate}

\subsubsection{Datasets for Data Contamination Detection}
\label{app:bookmia}
\textbf{BookMIA.} \citep{shi2023detecting} The original BookMIA data have been obtained from the Hugging Face dataset \texttt{swj0419/BookMIA}\footnote{\url{https://huggingface.co/datasets/swj0419/BookMIA}.}, accessed via the Hugging Face python \texttt{datasets} API (License: MIT). The dataset totals $9,870$ excerpts from a total of $100$ books, of which $50$ are labeled as members (positives) and $50$ are labeled as non-members (negatives).

Throughout all experiments on BookMIA, including the evaluation of baselines, we process only the first $128$ words from each excerpt, originally $512$-word long. This expedient allowed for faster LLM inference and lighter data storage at the time of dataset creation, i.e., the extraction and saving of LLM outputs.

As no standard split is available for this dataset, we proceed by randomly forming \emph{training} and \emph{test} sets in the proportions of, resp., $80\%$ and $20\%$. To ensure that \emph{all} excerpts from the same book are in either one of the two sets (and never in both), we first separate books into two separate lists based on their label, shuffle the obtained lists using a random seed of \texttt{42}, and then, for each of the two lists, take the first $80\%$ of books as training books, and the remaining $20\%$ as test books. Training and test sets are obtained by taking the corresponding excerpts from, respectively, training and test books. After this, we verified that the obtained sets are both approximately class-balanced ($\approx 50\%$ of excerpts in both the training and test sets are labeled as positives).

In the case of the reference-based baseline, we consider the smallest-sized available counterparts for the respectively attacked LLMs, namely: Pythia 70M for Pythia models and Llama-1 7B for Llama models. All LLMs are accessed through the Hugging Face python interface, specifically: \texttt{EleutherAI/pythia-70m}, \texttt{EleutherAI/pythia-\{6.9,12\}b}\footnote{\url{https://huggingface.co/EleutherAI/pythia-70m} (License: Apache-2.0), \url{https://huggingface.co/EleutherAI/pythia-6.9b}, \url{https://huggingface.co/EleutherAI/pythia-12b}.} and \texttt{huggyllama/llama-\{7,13,30\}b}\footnote{\url{https://huggingface.co/huggyllama/llama-7b}, \url{https://huggingface.co/huggyllama/llama-13b}, \url{https://huggingface.co/huggyllama/llama-30b}.} (License: Llama\footnote{\url{https://huggingface.co/huggyllama/llama-13b/blob/main/LICENSE}, \url{https://huggingface.co/huggyllama/llama-30b/blob/main/LICENSE}}).

\textbf{WikiMIA.} WikiMIA\cite{shi2023detecting} (License: MIT) is the first benchmark for pre-training data detection, comprising texts from Wikipedia events. The distinction between training and non-training data is determined by timestamps. WikiMIA organizes data into splits based on sentence length, enabling fine-grained evaluation. It also considers two settings: original and paraphrased. The original setting evaluates the detection of verbatim training texts, while the paraphrased setting, where training texts are rewritten using ChatGPT, assesses detection on paraphrased inputs. In this paper, we consider the original (non-paraphrased) split and focus on the 32 and 64 split sizes, as they contain the largest number of samples, approximately 750 and 550, respectively. On top of the LLMs attacked in BookMIA, here we also attack Mamba-1.4b (License: Apache-2.0), accessed via the Hugging Face interface (\url{https://huggingface.co/state-spaces/mamba-1.4b}).

\subsection{Extended Transferability Experiments}
\label{app:Extended Transferability Experiments}

\xhdr{Setup} We fine-tune the \ourmethod\ given target LLM/dataset (depending on the task) for 10 epochs ---significantly fewer than the number of epochs used in our standard training protocol. Notably, this fine-tuning process takes less than one minute in practice, on a single NVIDIA L-40 GPU.

To evaluate the effectiveness of fine-tuning, we benchmark the resulting model against two baselines. First, to assess knowledge transfer, we compare it with a \ourmethod\ model trained from scratch under the same 10-epoch setup. Second, we compare against the strongest known probability-based baselines with comparable detection latency—specifically, the strongest among Logits-mean/min/max and Probas-mean/min/max (see \Cref{tab:hall-merged}). This comparison is essential: generalization scores above 0.5 AUC are only meaningful if they outperform non-learnable baselines that rely purely on probabilities or logits. We note that we exclude  \colorbox{orange!10}{P(true)} and \colorbox{orange!10}{Semantic Entropy}  baselines from this assessment, as they incur significantly higher latency (five order of magnitude higher than \ourmethod) and are thus not directly comparable to \ourmethod\. These methods require additional generation or prompting to detect hallucinations. For a more detailed analysis, please refer to \Cref{subsec:Run-Time}.

Comprehensive results are shown in \Cref{fig:cross-llm-gen-HD-app} (cross-LLM generalization) and \Cref{fig:cross-data-gen-HD-app} (cross-dataset generalization).

In the heatmaps, a superscript `$^*$' indicates that the fine-tuned \ourmethod{} outperforms its scratch-trained counterpart in the same setting---evidence of successful transfer. \textbf{Bold} entries denote cases where it surpasses the best non-learnable probability-based method.

\begin{figure*}[t]
    \centering
    \includegraphics[width=0.329\textwidth]{Arxiv_updated/figures/Heatmaps/Cross_models/IMDB.pdf}
    \includegraphics[width=0.329\textwidth]{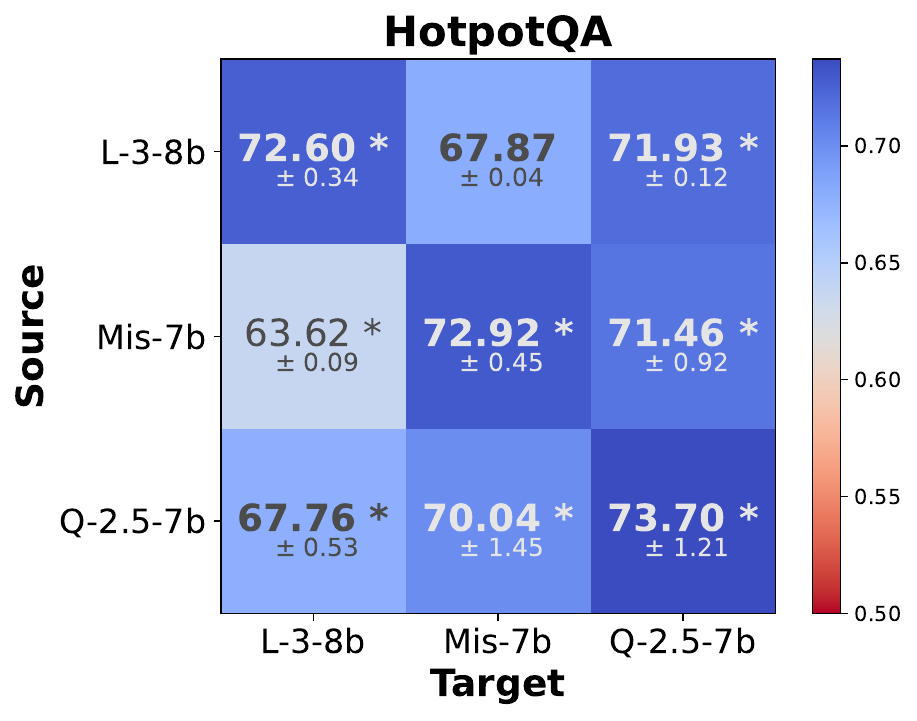}
    \includegraphics[width=0.329\textwidth]{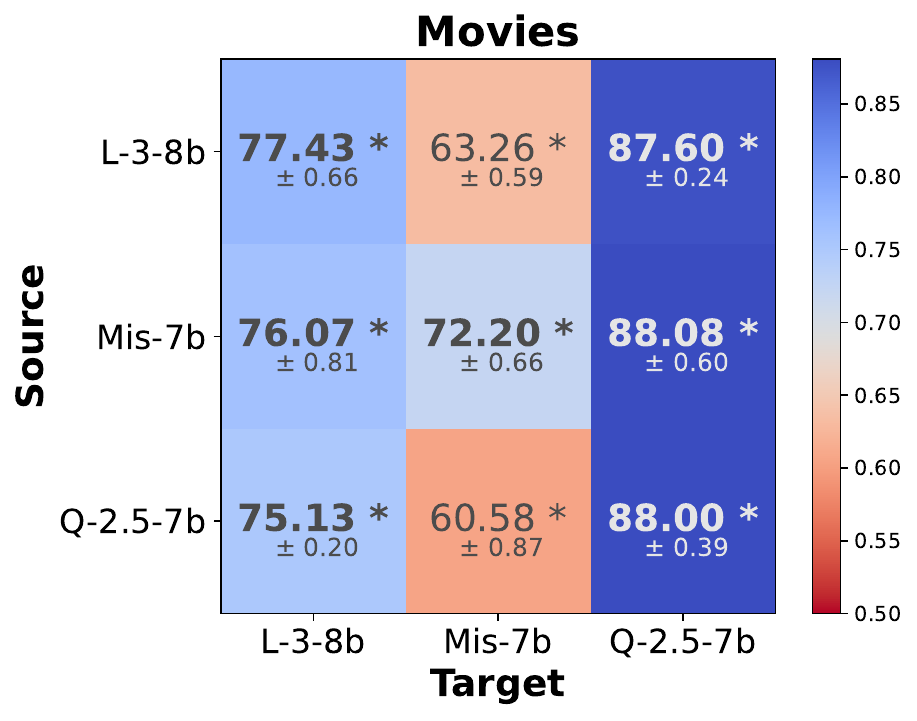}
    \caption{Cross-LLM transfer Test AUCs (cols: source LLMs, rows: target LLMs). \textbf{Bold}: finetuning \ourmethod\ outperforms baselines, $^*$: it outperforms the same \ourmethod\ trained from scratch.
    }
    \label{fig:cross-llm-gen-HD-app}
\end{figure*}

\begin{figure*}[t]
    \centering
    \includegraphics[width=0.329\textwidth]{Arxiv_updated/figures/Heatmaps/Cross_datasets/Mis-7b.pdf}
    \includegraphics[width=0.329\textwidth]{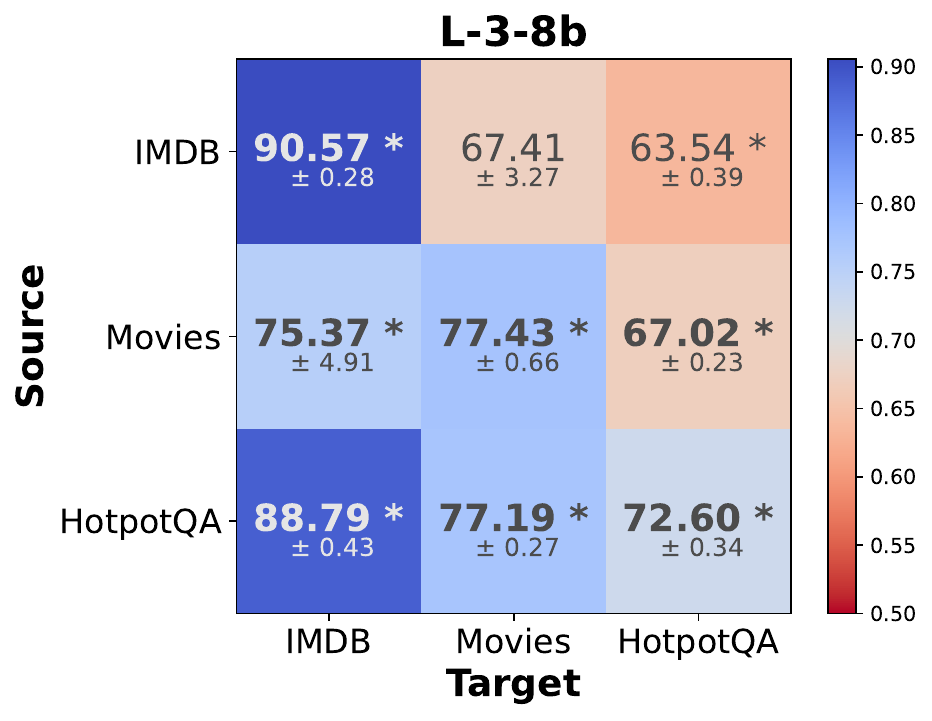}
    \includegraphics[width=0.329\textwidth]{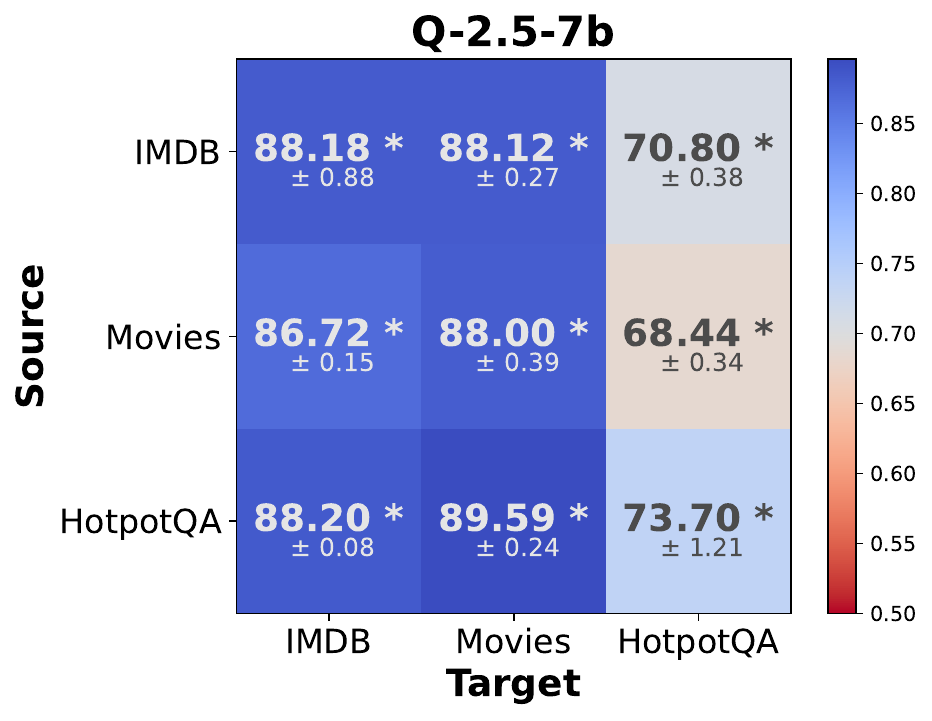}
    \caption{Cross-dataset transfer Test AUCs (cols: source data, rows: target data). \textbf{Bold}: finetuning \ourmethod\ outperforms baselines, $^*$: it outperforms the same \ourmethod\ trained from scratch.}
    \label{fig:cross-data-gen-HD-app}
\end{figure*}

\subsection{Ablation Study}
\label{app:Ablation Study}
Existing methods often overlook a critical aspect of LOS. Specifically, they primarily rely on the ATP, $\mathbf{p}$, while neglecting the TDS, $\mathbf{X}$. In this subsection, we conduct an ablation study to evaluate the significance of the TDS in general, as well as its size, namely the hyperparameter $K$ introduced in \Cref{eq: X'}.

\begin{figure*}[t]
    \centering
    \includegraphics[width=\textwidth]{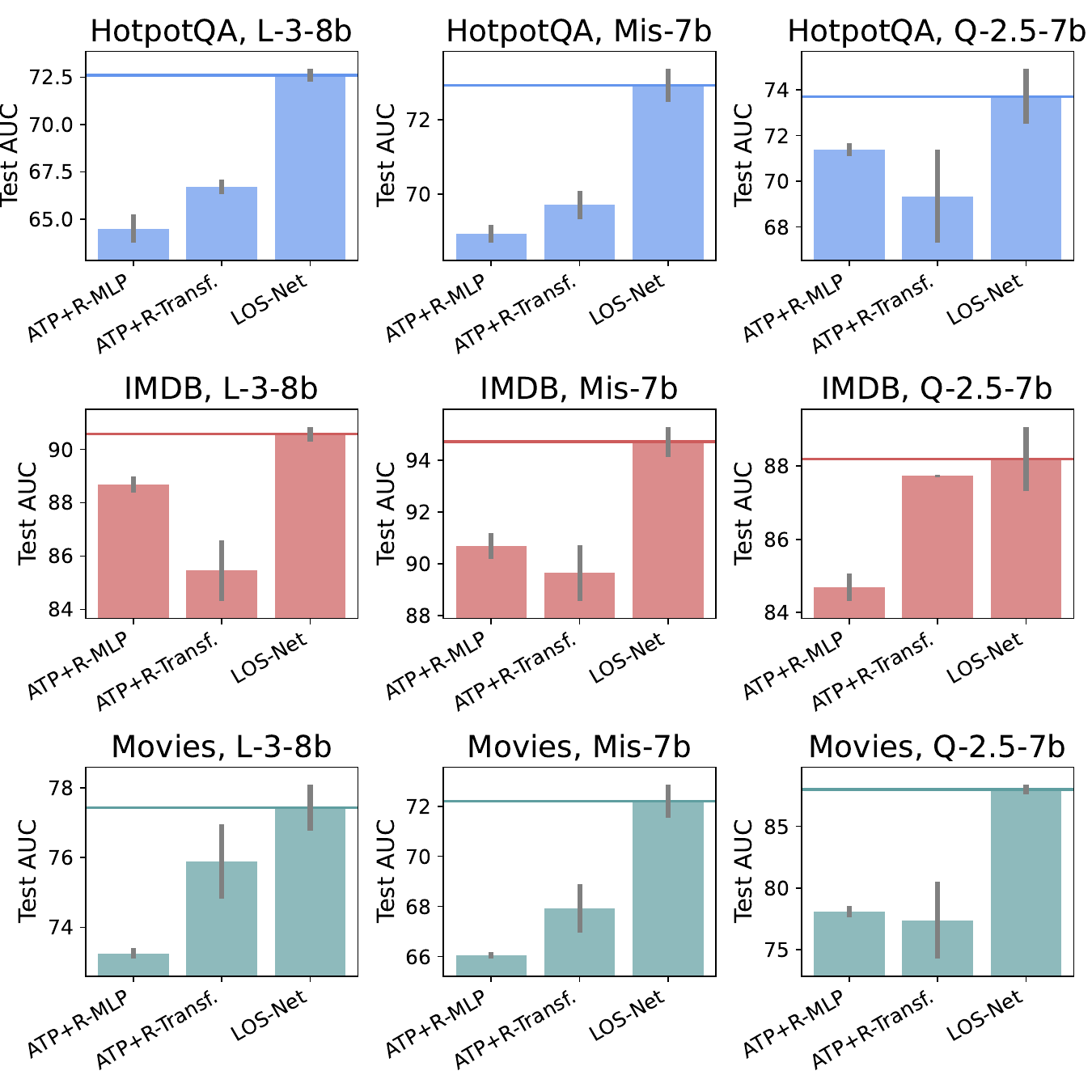}
    \caption{Ablation study evaluating the role of the TDS (\( \mathbf{X} \)) and the ATP (\( \mathbf{p} \)) on our HD setups, including datasets HotpotQA, IMDB, Movies, and LLMs L-3-8b, Mis-7b, Q-2.5-7b.}
    \label{fig:ablation_on_three_arch}
\end{figure*}

\begin{figure*}[t]
    \centering
    \includegraphics[width=\textwidth]{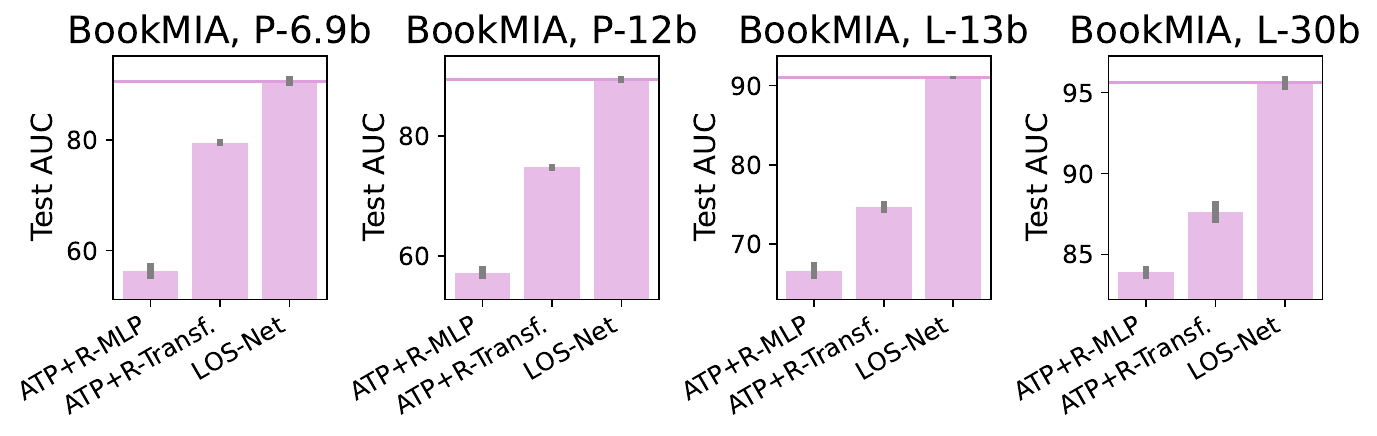}
    \caption{Ablation study evaluating the role of the TDS (\( \mathbf{X} \)) and the ATP (\( \mathbf{p} \)) on BookMIA for Pythia and Llama-1 LLMs.}
    \label{fig:ablation_on_three_arch_bookmia}
\end{figure*}

\textbf{The Role of the TDS (\( \mathbf{X} \)).}  
As a case study, we examine both the DCD task on the BookMIA dataset and the HD task across the three datasets: HotpotQA, IMDB, and Movies. 
%
\Cref{fig:ablation_on_three_arch,fig:ablation_on_three_arch_bookmia} report a close-up comparison between \ourmethod\ and our two proposed baselines, which explicitly neglect the TDS, namely, \textsc{ATP+R-Transf.} and \textsc{ATP+R-MLP}. These plots consistently show how the best-performing model is \ourmethod. In many cases, \ourmethod\ outperforms the alternatives by a significant margin, indicating that the information encoded in the TDS (\( \mathbf{X} \)) is crucial for both tasks. 
Regarding the two ATP-based baselines, we report that \textsc{ATP+R-Transf.} obtains better performance than \textsc{ATP+R-MLP} in 8 out of 12 cases, but these improvements do not seem to follow a clear pattern across LLMs and datasets. The only exception is BookMIA, on which the former architecture outperformed the latter across all the four attacked LLMs.

\begin{figure*}[!ht]
    \centering
    \includegraphics[width=\textwidth
    ]{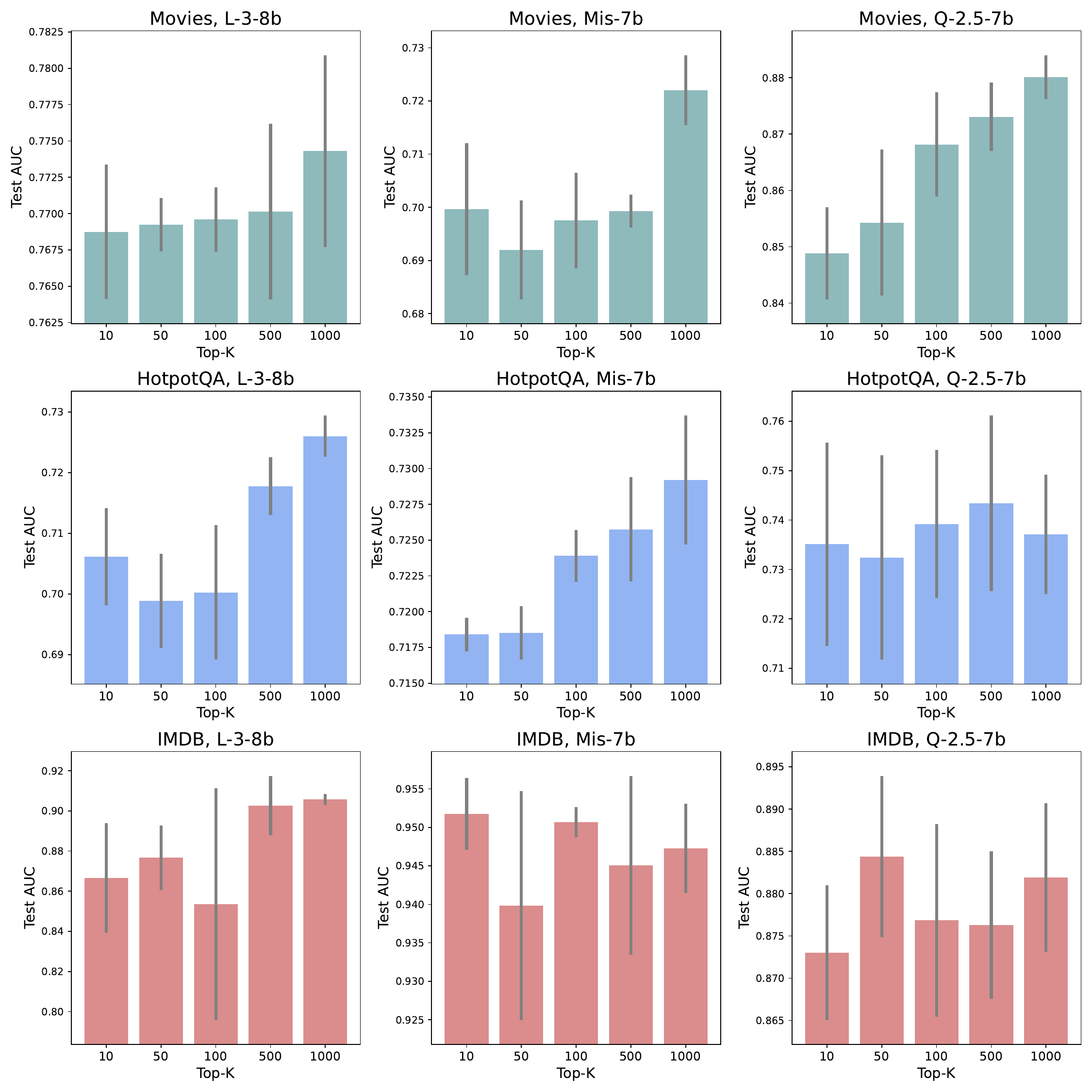}
    \caption{Ablation study analyzing the effect of the hyperparameter \( K \) introduced in \Cref{eq: X'}.}
    \label{fig:Ablation_histogram}
\end{figure*}

\begin{figure*}[htbp]
    \centering
    \begin{subfigure}[b]{0.3\textwidth}
        \includegraphics[width=\textwidth]{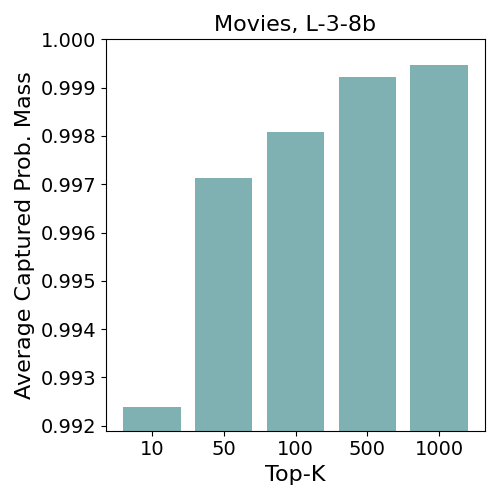}
    \end{subfigure}\hfill
    \begin{subfigure}[b]{0.3\textwidth}
        \includegraphics[width=\textwidth]{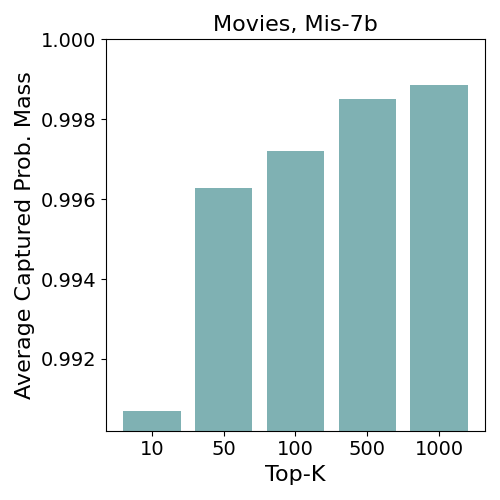}
    \end{subfigure}\hfill
    \begin{subfigure}[b]{0.3\textwidth}
        \includegraphics[width=\textwidth]{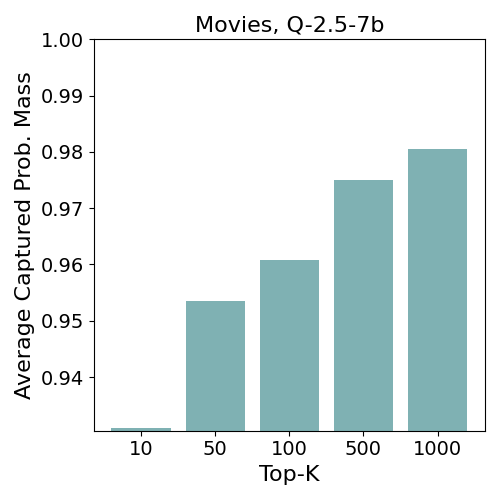}
    \end{subfigure}

    \begin{subfigure}[b]{0.3\textwidth}
        \includegraphics[width=\textwidth]{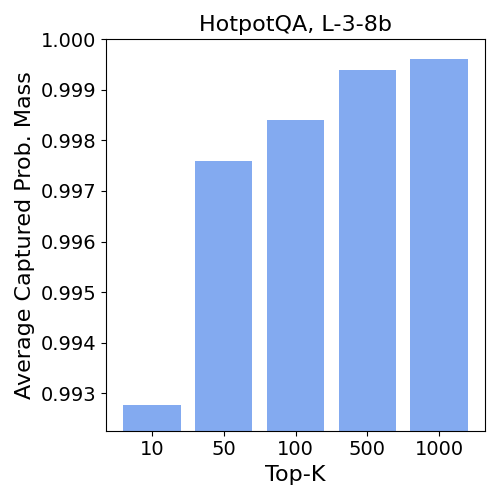}
    \end{subfigure}\hfill
    \begin{subfigure}[b]{0.3\textwidth}
        \includegraphics[width=\textwidth]{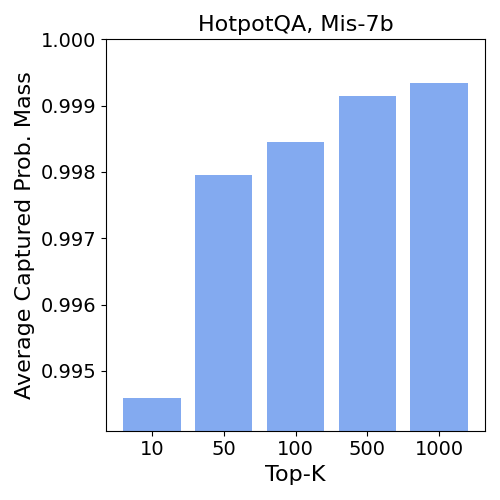}
    \end{subfigure}\hfill
    \begin{subfigure}[b]{0.3\textwidth}
        \includegraphics[width=\textwidth]{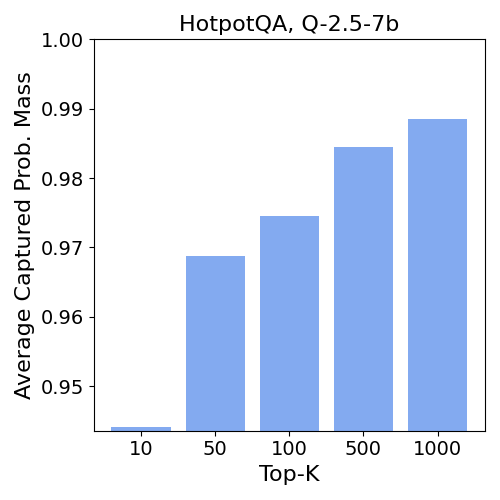}
    \end{subfigure}

    \begin{subfigure}[b]{0.3\textwidth}
        \includegraphics[width=\textwidth]{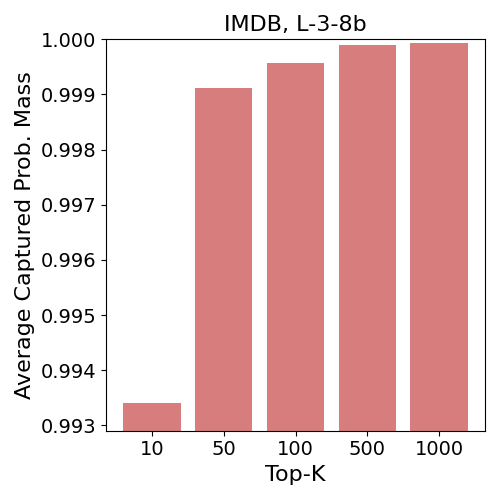}
    \end{subfigure}\hfill
    \begin{subfigure}[b]{0.3\textwidth}
        \includegraphics[width=\textwidth]{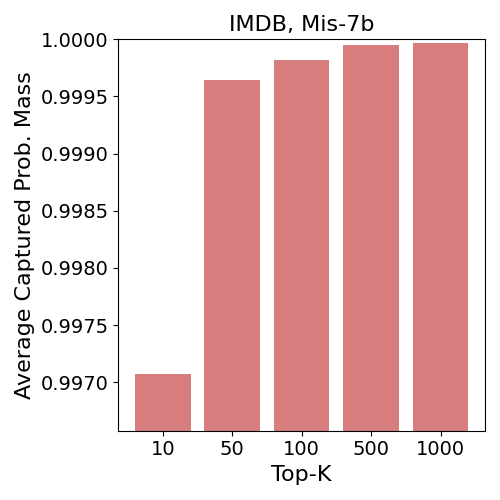}
    \end{subfigure}\hfill
    \begin{subfigure}[b]{0.3\textwidth}
        \includegraphics[width=\textwidth]{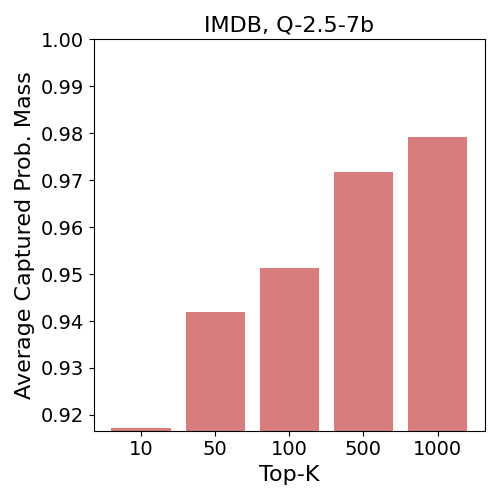}
    \end{subfigure}

\caption{Probability mass ($y$-axis) as a function of $K$ ($x$-axis) for each LLM/dataset combination considered in the HD study.}

    \label{fig:prob_mass_full}
\end{figure*}

\textbf {The hyperparameter \( K \).} To evaluate the impact of the hyperparameter \( K \) introduced in \Cref{eq: X'}, we conduct a comprehensive case study focusing on the task of HD.

We experiment with various values of \( K \), specifically \( K \in \{10, 50, 100, 500, 1000\} \), and trained the same selected model whose results are reported in~\Cref{tab:hall-merged} for $K=1000$. The corresponding Test AUCs are presented in \Cref{fig:Ablation_histogram}.

From the reported bar plots, we do observe that performances either weakly increase with $K$ (see, e.g., Movies for Q-2.5-7b or HotpotQA on L-3-8b), or stay approximately constant (see, e.g., IMDB on Mis-7b). In any case, the performance difference w.r.t.\ our default setting $K=1000$ remains contained. This is a valuable feature, as it unlocks the effective application of \ourmethod\ even on non fully open LLMs such as the most recent models released by OpenAI\footnote{At the time of writing, OpenAI's API only gives access to a maximum of $20$ top scoring logprobs (\url{https://platform.openai.com/docs/api-reference/completions/create}, accessed May 2025.}.

To complement \Cref{fig:Ablation_histogram}, we present \Cref{fig:prob_mass_full}, which shows the average probability mass captured for each value of $K$, computed as $\nicefrac{\sum_{n=1}^{N}\left(\sum_{v=1}^{V}(\mathbf{X}'_{n,v})\right)}{N}$. Across all LLM/dataset combinations and for every $K \in \{10, 50, 100, 500, 1000\}$, the captured probability mass exceeds 91\%. This helps explain why even small values such as $K=10$ perform well, as observed in \Cref{fig:Ablation_histogram}.


\begin{table}[h]
\footnotesize
    \centering
    \caption{Test AUC scores for HD on Qwen-2.5-7b-Instruct (Q-2.5-7b). The best-performing method is in \textbf{bold}, and the second best is \underline{underlined}.}
    \begin{tabular}{l|ccc}
        \toprule
        \multirow{2}{*}{Method} & HotpotQA & IMDB & Movies \\
         \cmidrule{2-4}
        & \multicolumn{3}{c}{Q-2.5-7b} \\
        \midrule
        Logits-mean          & 
            66.2 & 
            74.8 & 
            71.3 \\
        Logits-min           & 
            59.8 & 
            72.1 & 
            42.1 \\
        Logits-max           & 
            60.4 & 
            60.7 & 
            65.1 \\
        Probas-mean          & 
            67.5 & 
            74.6 & 
            74.2 \\
        Probas-min           & 
            54.4 & 
            65.4 & 
            44.7 \\
        Probas-max           & 
            61.8 & 
            50.1 & 
            72.9 \\
        \midrule
        \textbf{\textsc{ATP+R-MLP}}           & 
            \underline{71.38}  \scalebox{0.8}{$\pm$ 0.28}  &
            84.69 \scalebox{0.8}{$\pm$ 0.37}  &
            \underline{78.06} \scalebox{0.8}{$\pm$ 0.45}  \\
        \textbf{\textsc{ATP+R-Transf.}}       & 
            69.34 \scalebox{0.8}{$\pm$ 2.04}  &
            \underline{87.73} \scalebox{0.8}{$\pm$ 0.03} &
            77.37 \scalebox{0.8}{$\pm$ 3.13}   \\
        \textbf{\ourmethod}      & 
            \textbf{73.71} \scalebox{0.8}{$\pm$ 1.21} &
            \textbf{88.19} \scalebox{0.8}{$\pm$ 0.88}  &
            \textbf{88.00} \scalebox{0.8}{$\pm$ 0.39}  \\
        \bottomrule
    \end{tabular}
    \label{tab:qwen}
\end{table}

\subsection{Results For Hallucination Detection for Qwen-2.5-7b}
\label{app:Results On Qwen}

\Cref{tab:qwen} reports results on our three considered HD datasets over LLM Qwen-2.5-7b-Instruct (Q-2.5-7b)~\citep{qwen2}. We can see \ourmethod outperforms all non-learnable output-based baselines by large margin, as well as our learnable baselines \textsc{ATP+R-Transf.} and \textsc{ATP+R-MLP}.

\subsection{Results On The WikiMIA Dataset}
\label{app:Results On The WikiMIA Dataset}

\begin{table*}[ht]
    \centering
    \scriptsize
    \caption{Comparison of AUC over four different LLMs, on DCD, over the discussed baselines methods. The best-performing method is in \textbf{bold}, and the second best is \underline{underlined}. \colorbox{red!10}{Reference-based} approaches are shaded in pink.}
    \label{tab:WikiMIA}
    \resizebox{\textwidth}{!}{
    \begin{tabular}{l|cccc|cccc}
        \toprule
        Dataset $\rightarrow$ & \multicolumn{4}{c|}{WikiMIA - 32} & \multicolumn{4}{c}{WikiMIA - 64} \\
        \cmidrule{2-9}
        LLM $\rightarrow$ & P-6.9b & L-13b & L-30b & M-1.4b & P-6.9b & L-13b & L-30b & M-1.4b\\
        \midrule
        Loss                 & 63.82 \tiny$\pm$2.22  & 67.45 \tiny$\pm$1.57  & 69.37 \tiny$\pm$2.66  & 60.89 \tiny$\pm$1.35 & 60.59 \tiny$\pm$3.50  & 63.68 \tiny$\pm$5.57  & 66.18 \tiny$\pm$4.64  & 58.46 \tiny$\pm$3.69  \\
        MinK            & 66.39 \tiny$\pm$2.56  & 68.08 \tiny$\pm$1.45  & 70.02 \tiny$\pm$2.92  & 63.27 \tiny$\pm$1.85 & 65.07 \tiny$\pm$1.80  & 66.24 \tiny$\pm$5.01  & 68.45 \tiny$\pm$4.11  & 62.46 \tiny$\pm$2.75 \\
        MinK++          & \underline{70.60} \tiny$\pm$3.58  & \underline{84.93} \tiny$\pm$1.76  & \underline{84.46} \tiny$\pm$1.43  & \underline{67.06} \tiny$\pm$2.78 & \underline{71.82} \tiny$\pm$3.73  & \underline{85.66} \tiny$\pm$2.25  & \underline{85.02} \tiny$\pm$2.79  & \underline{67.24} \tiny$\pm$4.06 \\
        \midrule
        \rowcolor{red!10} Zlib                  & 64.35 \tiny$\pm$3.46  &67.70 \tiny$\pm$2.25  & 69.81 \tiny$\pm$3.17  & 62.07 \tiny$\pm$3.35  &  62.59 \tiny$\pm$3.38 & 65.40 \tiny$\pm$5.35  & 67.61 \tiny$\pm$4.21  & 60.59 \tiny$\pm$3.73  \\
        \rowcolor{red!10} Lowercase           & 62.09 \tiny$\pm$4.22  & 64.03 \tiny$\pm$6.97  & 64.31 \tiny$\pm$5.18 & 60.59 \tiny$\pm$3.24 & 58.34 \tiny$\pm$4.21  & 62.63 \tiny$\pm$5.05  & 61.54 \tiny$\pm$7.81  & 57.03 \tiny$\pm$2.83 \\
        \rowcolor{red!10} Ref             & 63.45 \tiny$\pm$6.03  & 57.77 \tiny$\pm$5.94  & 63.55 \tiny$\pm$6.69  & 62.05 \tiny$\pm$5.43 & 62.35 \tiny$\pm$4.84  & 63.07 \tiny$\pm$5.09  & 68.94 \tiny$\pm$5.83  & 60.29 \tiny$\pm$4.62 \\
        \midrule
        \textbf{\ourmethod}      & \textbf{76.98} \tiny$\pm$3.36  & \textbf{93.46} \tiny$\pm$1.31  & \textbf{93.76} \tiny$\pm$1.56 & \textbf{71.04} \tiny$\pm$9.07 &  \textbf{76.00} \tiny$\pm$5.48  & \textbf{87.86} \tiny$\pm$3.73  & \textbf{93.04} \tiny$\pm$2.51 & \textbf{79.39} \tiny$\pm$2.61  \\        
        \bottomrule
    \end{tabular}}
\end{table*}

The WikiMIA-32 and -64 datasets contain excerpts from Wikipedia articles, consisting of, resp., 32 and 64 words. The distinction between contaminated and uncontaminated data is determined by timestamps. As in~\citep{shi2023detecting,zhang2024min}, we attack Mamba-1.4b~\cite{gu2023mamba} (M-1.4b), LlaMa-13b/30b~\cite{touvron2023llama} (L-13b/30b), Pythia-6.9b \cite{biderman2023pythia} (P-6.9b).

Since WikiMIA does not provide an official training split and our method requires labeled data, we perform 5-fold cross-validation with training, validation, and testing splits\footnote{We use $\{\frac{3}{5}, \frac{1}{5}, \frac{1}{5}\}$ as the ratios for training, validation, and testing, respectively.} and rerun all baselines under the same protocol for a fair comparison. Results are reported as the mean and standard deviation across folds. For these datasets only, setting the hyperparameter $K=1000$ (recall \Cref{eq: X'}) led to suboptimal performance in preliminary experiments, thus, we set $K=\text{``Full-Vocabulary''}$.

As shown in \Cref{tab:WikiMIA}, \ourmethod\ consistently surpasses all baseline methods across all eight combinations of LLMs and datasets. Notably, for L-30b, our model achieves an AUC score that is more than 8 points higher than the best-performing baseline, MinK\%++ for both datasets, demonstrating a substantial improvement. Similarly, for P-6.9b, our model maintains a steady advantage of approximately 5 AUC for both datasets, further underscoring its robustness. Overall, the second-best method is MinK\%++, followed by MinK\%, consistent with the findings of \cite{zhang2024min}.

\subsection{Empirical Run-Times}
\label{app:Empirical Run-Times}
In \Cref{tab:LOS-Net full timing}, we report the wall-clock training times (for the best selected model based on the held-out validation set) and single-example detection times for \ourmethod\, for all experiments presented in this paper.

\begin{table*}[t!]
\centering
\small 
\setlength{\tabcolsep}{5pt} 
\captionsetup{width=\textwidth}  
\caption{Comparison of training and detection times of our model \ourmethod\, across all the DC settings explored in our paper, as well as HD settings for Mis-7b and L-3-8b. All are measured on a \textbf{single NVIDIA L-40 GPU}.}
\makebox[\textwidth][c]{%
\begin{tabular}{
  >{\bfseries}l
  >{\raggedright\arraybackslash}p{2cm}
  >{\raggedright\arraybackslash}p{2cm}
  >{\centering\arraybackslash}p{3.8cm}
  >{\centering\arraybackslash}p{4cm}
}
\toprule
\rowcolor{lightgray}
\textbf{Task} & \textbf{Target LLM} & \textbf{Dataset} & \textbf{Training Time} 
[h = hours, m = minutes, 
s = seconds] & \textbf{Detection Time (Mean $\pm$ Std)} [seconds] \\
\midrule
\multirow{6}{*}{HD} & 
\multirow{3}{*}{Mis-7b} & HotpotQA & 9m 19s & $3.32 \times 10^{-5} \pm 1.20 \times 10^{-5}$ s \\
 & & IMDB & 16m 8s & $4.05 \times 10^{-5} \pm 1.83 \times 10^{-5}$ s \\
 & & Movies & 17m 50s & $1.95 \times 10^{-5} \pm 7.24 \times 10^{-6}$ s \\
\cmidrule{2-5}
 & \multirow{3}{*}{L-3-8b} & HotpotQA & 6m 39s & $2.38 \times 10^{-5} \pm 7.18 \times 10^{-6}$ s \\
 & & IMDB & 4m 23s & $3.37 \times 10^{-5} \pm 1.53 \times 10^{-5}$ s \\
 & & Movies & 11m 34s & $3.05 \times 10^{-5} \pm 1.21 \times 10^{-5}$ s \\
\midrule
\multirow{11}{*}{DCD} & 
\multirow{3}{*}{L-13b} & WikiMIA-32 & 33m 6s & $4.13 \times 10^{-5} \pm 1.67 \times 10^{-6}$ s \\
 & & WikiMIA-64  & 2m 7s & $2.67 \times 10^{-5} \pm 1.12 \times 10^{-5}$ s \\
 & & BookMIA & 7m 32s & $3.67 \times 10^{-5} \pm 8.65 \times 10^{-6}$ s \\
\cmidrule{2-5}
 & \multirow{3}{*}{L-30b} & WikiMIA-32 & 28m 40s & $4.05 \times 10^{-5} \pm 3.10 \times 10^{-6}$ s \\
 & & WikiMIA-64  & 5m 8s & $4.96 \times 10^{-5} \pm 2.54 \times 10^{-5}$ s \\
 & & BookMIA & 16m 38s & $3.94 \times 10^{-5} \pm 1.42 \times 10^{-5}$ s \\
\cmidrule{2-5}
 & \multirow{3}{*}{P-6.9} & WikiMIA-32 & 24m 55s & $2.91 \times 10^{-5} \pm 4.41 \times 10^{-6}$ s \\
 & & WikiMIA-64  & 26m 13s & $3.18 \times 10^{-5} \pm 1.56 \times 10^{-5}$ s \\
 & & BookMIA & 18m 23s & $2.86 \times 10^{-5} \pm 6.16 \times 10^{-6}$ s \\
\cmidrule{2-5}
 & P-12b & BookMIA & 19m 49s & $4.07 \times 10^{-5} \pm 4.89 \times 10^{-6}$ s \\
\cmidrule{2-5}
 & \multirow{2}{*}{M-1.4b} & WikiMIA-32 & 1h 6m 18s & $3.87 \times 10^{-5} \pm 1.27 \times 10^{-6}$ s \\
 & & WikiMIA-64 & 1h 16m 51s & $3.12 \times 10^{-5} \pm 1.42 \times 10^{-5}$ s \\
\bottomrule
\end{tabular}
}
\label{tab:LOS-Net full timing}
\end{table*}

\section{Additional Tasks Background}
\label{app:additional background}
In this section, we provide some additional background and motivation for the DCD and HD tasks.
\xhdr{Data Contamination Detection.} 
Large-scale pre-training of LLMs typically involves crawling vast amounts of online data, a common practice to meet their substantial data requirements. However, this approach risks exposing models to evaluation datasets, potentially compromising our ability to assess their generalization performance accurately \cite{NEURIPS2020_1457c0d6}, or, taking a different perspective, can pose legal and ethical issues when models are accidentally exposed to copyrighted or sensitive data during training. This phenomenon is typically referred to as Data Contamination. Recently, \citet{li-etal-2024-open-source} demonstrated that LLMs from the widely used LLaMA \cite{touvron2023llama} and Mistral \cite{jiang2023mistral} model families exhibit significant data contamination, particularly concerning frequently used evaluation datasets.

\xhdr{Hallucination Detection.}
LLMs' tendency to generate untrustworthy outputs, commonly known as "hallucinations," remains a significant challenge to their widespread adoption in real-world applications \cite{tonmoy2024comprehensive}. To address this issue, various hallucination mitigation techniques have been proposed, including retrieval-augmented generation \cite{lewis2020retrieval,izacard2023atlas,gao2023retrieval}, customized fine-tuning \cite{maynez-etal-2020-faithfulness,cao-etal-2022-learning,qiu-etal-2023-detecting}, and, inference-time manipulation \cite{li2024inference,qiu2024spectral,zhao-etal-2024-enhancing}, to name a few. However, applying these methods to all user-LLM interactions can be computationally expensive. As a more targeted approach, hallucination detection has been explored to enable selective intervention only when necessary.

\xhdr{General Considerations on Annotations.} We consider access to a set of annotations $y$'s, which we naturally associate with the corresponding LOS elements.
These encode ground-truth labels pertaining to problems of interest, e.g., whether the input text $\vec{s}$ is in the pretraining corpus of $f$, or whether $f$ hallucinated when generating $\vec{g}$ from prompt $\vec{s}$.
Collecting these annotations is generally possible, and various strategies could be adopted. For example, for DCD, labels can be gathered with collaborative efforts testing for text memorization, as studied e.g.\ in~\citep{chang2023speak}. We also note that annotations are immediately (and trivially) available for open-source LLMs with disclosed pretraining corpora such as Pythia~\citep{biderman2023pythia}. 
As we demonstrated in \Cref{sec:exp}, models trained on annotations available for one LLM can, in some cases, be \emph{transferred} and applied to another LLM. 

For HD, ground-truth labels can be collected by providing the target LLM with inputs prompting for completion or question answering on known facts and/or reasoning tasks. 
Hallucinations or error annotations are derived by comparing the consistency of the model's response with known, factually true, or logically correct answers. For further details, refer to \Cref{app:Datasets for Hallucination Detection}.




\newpage

\section{LOS-Net Visualization}
\label{app:LOS-Net Visualization}
In \Cref{fig:our-arch} we provide a visualization of our architecture, \ourmethod\ .
\begin{figure}[H]
    \centering
    \includegraphics[width=\columnwidth]{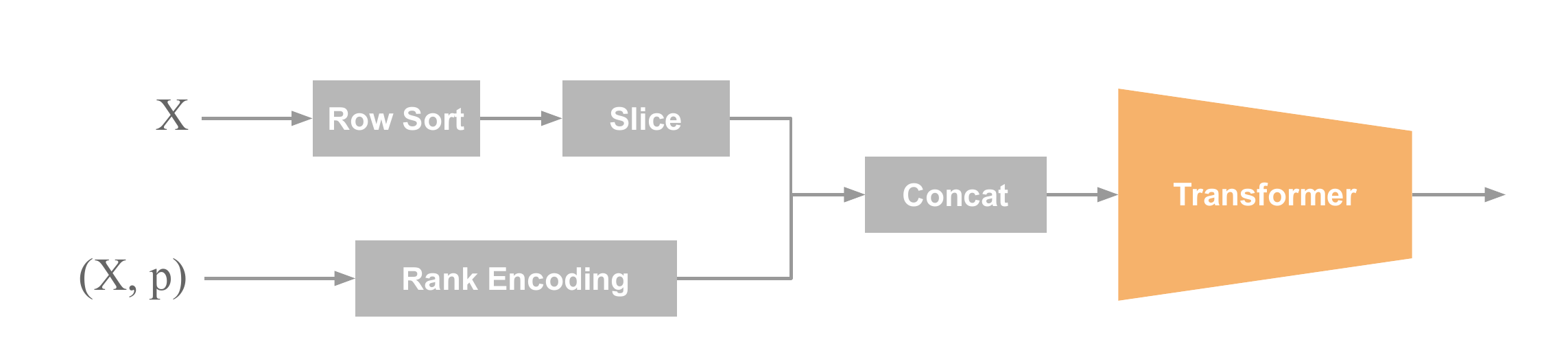}
    \caption{A visualization of \ourmethod\ .}
    \label{fig:our-arch}
\end{figure}

\section{LLM Processing Pipeline}
\label{app:LLM Processing Pipeline}
In \Cref{fig:llm generation} we provide a visualization of the LLM processing pipeline.

\begin{figure}[H]
    \centering
    \includegraphics[width=0.8\columnwidth]{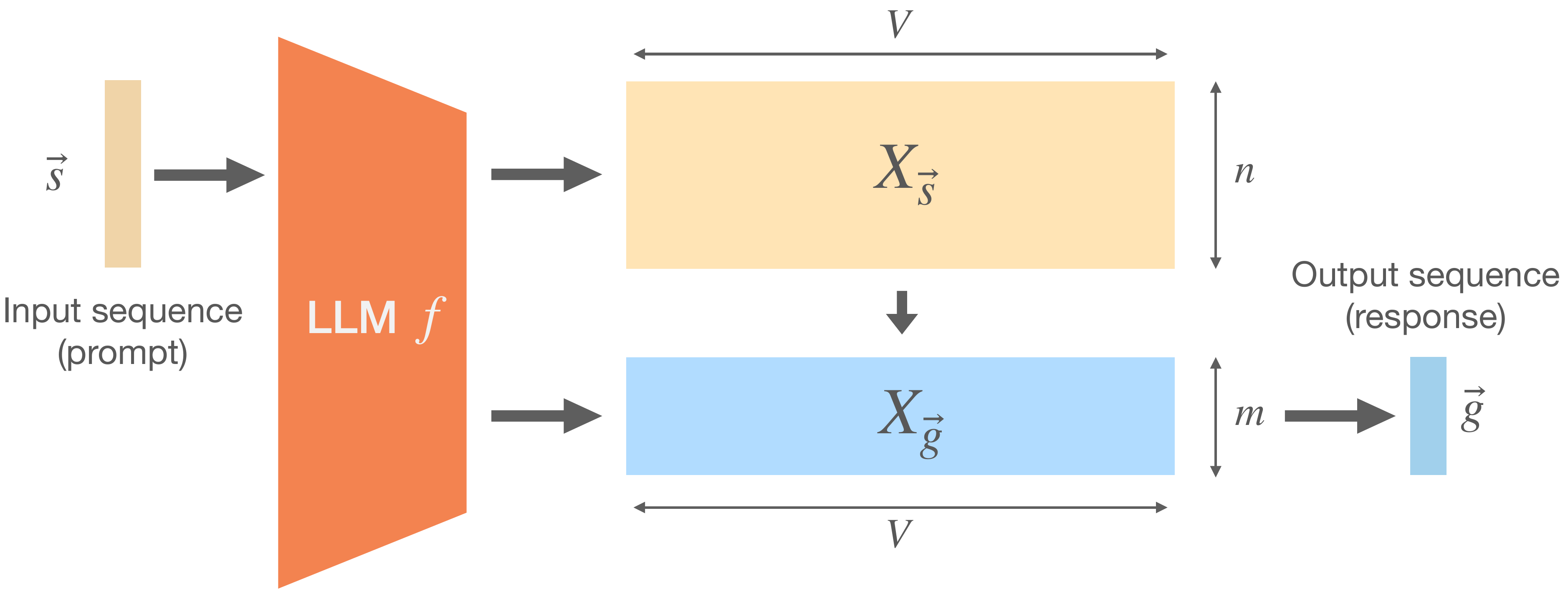}
    \caption{LLM processing pipeline. Token sequence $\vec{s}$ is processed by an LLM $f$, generating full TDSs $\textbf{X}_s, \textbf{X}_g$ for input $\vec{s}$ and response $\vec{g}$.}
    \label{fig:llm generation}
\end{figure}

\section{Importance of TDS Illustration}
\label{app:Importance of TDS Illustration}
We demonstrate the importance of the TDS tensor through the following example, see \Cref{fig:TDS imp}

\begin{figure}[H]
    \centering
    \includegraphics[width=0.8\linewidth]{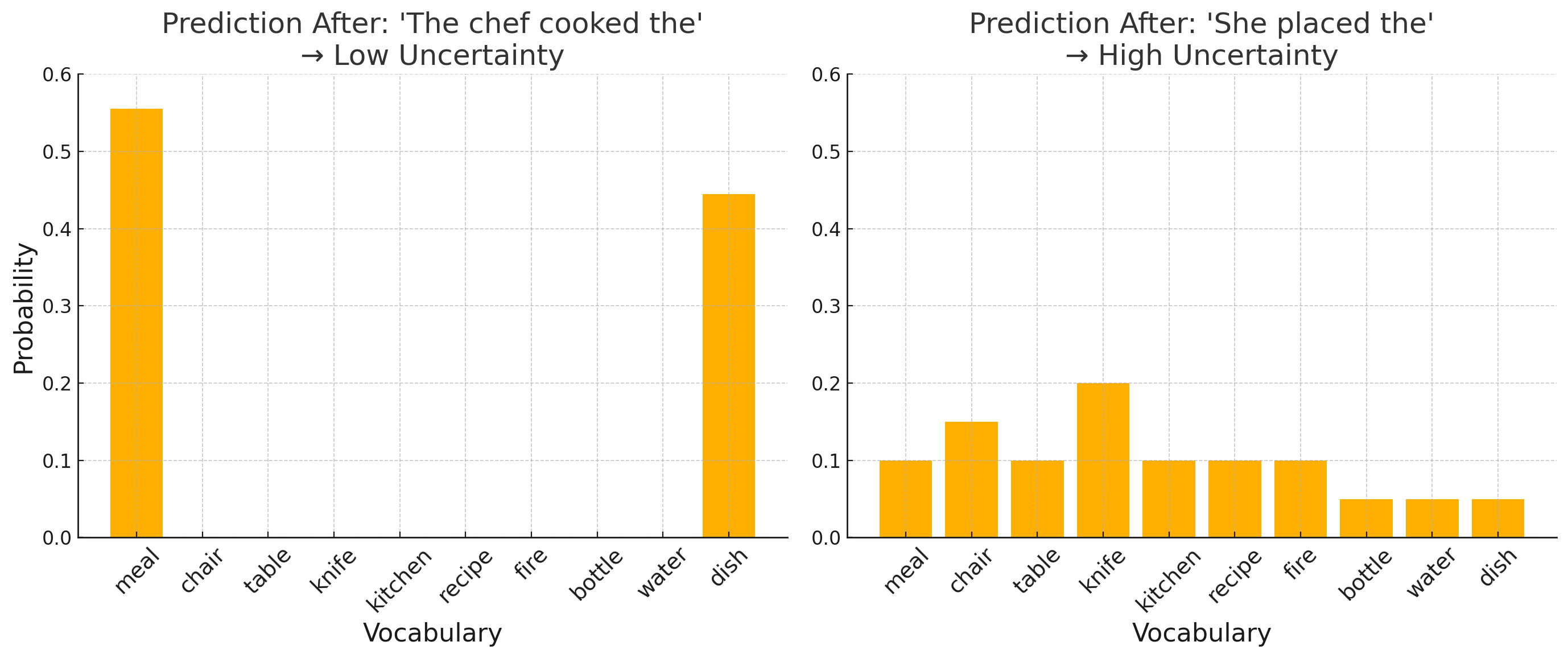}
    \caption{Illustrative example of the importance of the TDS.}
    \label{fig:TDS imp}
\end{figure}

\end{document}